\documentclass[11pt]{article}
\usepackage{enumerate}
\usepackage{pdfsync}
\usepackage[OT1]{fontenc}
\usepackage{xcolor}
\usepackage{smile}
\usepackage[colorlinks,
            linkcolor=red,
            anchorcolor=blue,
            citecolor=blue
            ]{hyperref}
\usepackage{fullpage}
\usepackage[protrusion=true,expansion=true]{microtype}
\usepackage{pbox}
\usepackage{setspace}
\usepackage{tabularx}
\usepackage{float}
\usepackage{wrapfig,lipsum}
\usepackage{enumitem}
\usepackage{pgfplots}
\usetikzlibrary{arrows,shapes,snakes,automata,backgrounds,petri}

\newcommand{\algo}{\texttt{SI-CDB}}
\DeclareMathOperator{\Ber}{Ber}
\DeclareMathOperator{\Conf}{Conf}

\DeclareMathOperator{\Regret}{Regret}

\DeclareMathOperator{\KL}{KL}
\DeclareMathOperator{\elud}{\text{elud}}

\newcommand{\la}{\langle}
\newcommand{\ra}{\rangle}

\allowdisplaybreaks
\usepackage{colortbl}
\definecolor{LightCyan}{rgb}{0.8, 0.9, 1}
\definecolor{LightGray}{gray}{0.9}

\ifdefined\final
\usepackage[disable]{todonotes}
\else
\usepackage[textsize=tiny]{todonotes}
\fi
\makeatletter
\newcommand*{\rom}[1]{\expandafter\@slowromancap\romannumeral #1@}
\makeatother
\title{Saturation-Insensitive Dueling Bandits with\\General Function Approximation}
\author
{
    Chenggong Zhang\thanks{Department of Computer Science, University of California, Los Angeles, CA 90095, USA; e-mail: {\tt chenggong61@g.ucla.edu}}
    ~~~  
    Xuheng Li\thanks{Department of Computer Science, University of California, Los Angeles, CA 90095, USA; e-mail: {\tt xuheng.li@cs.ucla.edu}} 
    ~~~
    Qiwei Di\thanks{Department of Computer Science, University of California, Los Angeles, CA 90095, USA; e-mail: {\tt qiwei2000@cs.ucla.edu}}
	~~~
    Weitong Zhang \thanks{School of Data Science and Society, University of North Carolina, Chapel Hill, NC 27599, USA; e-mail: {\tt weitongz@unc.edu}}
    ~~~
	Quanquan Gu\thanks{Department of Computer Science, University of California, Los Angeles, CA 90095, USA; e-mail: {\tt qgu@cs.ucla.edu}}
}
\begin{document}
    \date{}
    \maketitle
\begin{abstract}
We study contextual dueling bandits with general function approximation under the \\ Bradley–Terry–Luce (BTL) preference model. A key challenge in this setting is the \textit{saturation} of the preference model: when the current reward model can already distinguish two actions with high confidence, the resulting preference feedback becomes weakly informative, making it difficult to further improve reward estimation. Consequently, existing sample-complexity analyses often depend on the inverse-derivative factor $1 / \sigma'[\Delta_{r^*}]$  which can be prohibitively large when the link function $\sigma$ saturates for large reward gaps $\Delta_{r^*}$. To address this issue, we introduce \algo, an algorithm that selects opponent arms using a carefully designed heuristic for arm selection. This design enables saturation-insensitive reward learning and recovers the near-optimal dependence for linear reward classes, eliminating the unfavorable $1/\sigma'(\cdot)$ factor. The core of our analysis is a localized Eluder dimension framework tailored to dueling bandits with general function approximation. Our theoretical results also explain why two-arm regret analysis is crucial for improving single-arm performance in dueling bandits.
\end{abstract}

\section{Introduction}
Dueling bandits~\citep{yue2012k, saha2021optimal, bengs2022stochastic} model online learning from binary preference feedback: in each round, the learner selects two actions and observes which one is preferred. This framework underlies the theory of reinforcement learning from human feedback (RLHF; \citealt{zhu2023principled, ji2023provable}) and of online preference-based fine-tuning of large language models (LLMs;~\citealt{ouyang2022training, xiong2024iterative}). The most widely used preference model is the Bradley--Terry--Luce (BTL) model~\citep{hunter2004mm, luce1959individual}, under which action $a$ is preferred to action $b$ in context $x$ with probability
\begin{align}\label{eq:intro-btl}
\PP(a \succ b \mid x) = \frac{\exp(r^*(x, a))}{\exp(r^*(x, a)) + \exp(r^*(x, b))} = \sigma\big(r^*(x, a) - r^*(x, b)\big),
\end{align}
where $r^*$ is an unknown reward function and $\sigma(z) = 1/(1+e^{-z})$ is the sigmoid \emph{link function}.

In these settings, simply collecting more pairwise comparisons is not enough, as the comparisons must also be informative for the learning objective.
Recent work on active preference learning for LLM alignment and RLHF supports this point, showing that adaptive query selection can improve label efficiency~\citep{muldrew2024active,das2025active,melo2024deep,ji2024reinforcement}. We study one basic mechanism behind uninformative comparisons under the BTL model: saturation of the sigmoid link.
To illustrate this issue, consider a context $x$ and two actions $(a,b)$ such that both the ground-truth reward $r^*$ and a current reward estimate $r_t$ clearly distinguish between them; i.e., $r^*(x, a) \gg r^*(x, b)$ and $r_t(x, a) \gg r_t(x, b)$.
In this regime, even if $r_t$ is still substantially different from $r^*$, the two induced preference models may be nearly indistinguishable on the comparison pair $(a,b)$, i.e., $\PP^*(a \succ b | x) \approx \PP_{r_t}(a \succ b | x) \approx 1$.
As a result, observing preferences between $a$ and $b$ provides little information for improving $r_t$. 
This saturation effect is reflected in the inverse-curvature dependence that appears in existing analyses~\citep{xie2025exploratory,xiong2024iterative}.
Under the BTL model, the derivative of the sigmoid link with respect to the reward gap $\Delta$ is $\sigma'(\Delta)=\sigma(\Delta)(1-\sigma(\Delta))$.
When the comparison is saturated, this derivative is exponentially small in the magnitude of the reward gap.
Consequently, analyses that rely on a uniform lower bound on the link curvature incur factors such as $\sup_{\Delta\in[-2B,2B]}1/\sigma'(\Delta)=\exp(\Theta(B))$, which can be prohibitively large in the reward scale $B$. We call a guarantee saturation-sensitive if its leading dependence scales with this worst-case inverse-curvature factor, and saturation-insensitive if the leading term avoids such dependence.

Such global inverse-curvature dependence appears in existing analyses
of contextual dueling bandits
\citep{saha2021optimal,bengs2022stochastic,di2024variance,li2024feel}.
This dependence is not unavoidable, as prior work has obtained sharper
regret bounds by exploiting local curvature in logistic bandits \citep{faury2020improved,abeille2021instance}, and a similar method has been developed for linear contextual dueling bandits \citep{di2025nearly}. However, these methods rely on a known linear parameterization to construct curvature-adaptive confidence sets and control exploration (see Appendix~\ref{app:rb-prior} for a detailed discussion). Their analyses therefore do not directly extend to general reward classes, leaving open how to obtain saturation-insensitive guarantees beyond the linear setting.

% We identify this saturation effect as the source of the inverse-curvature dependence that appears in existing analyses of contextual dueling bandits and RLHF with general function approximation. In particular, existing bounds often incur a factor of $\sup_{z\in[-B, B]} 1 / \sigma'(\Delta_{r^*}(z)) \sim \Theta(\exp(B))$ \citep{xie2025exploratory,xiong2023iterative}, which can be prohibitively exponential in the reward magnitude $B$.

To obtain widely applicable guarantees, we study it in the most general formulation of contextual dueling bandits considered here, allowing arbitrary realizable reward-function classes, and ask:
\begin{center}
\emph{Is it possible to get rid of the reward saturation phenomenon in\\  dueling bandits with general function approximation?}
\end{center}
In this paper, we answer this question affirmatively by presenting \algo. The core of the algorithm is to mitigate saturation through online exploration: after choosing a greedy target arm, it uses the current confidence set to select an optimistic comparator arm. This asymmetric arm-selection rule leads to a dueling-specific regret decomposition, which we combine with a localized eluder analysis over the induced excess-loss class.

We summarize the contributions of this paper as follows:
\begin{itemize}[leftmargin=*]
\item[1.] 
We propose \algo, a saturation-insensitive algorithm for contextual dueling bandits with general function approximation. The algorithm selects actions through two asymmetric function-action pairs. It first forms a target pair $(r_1,a)$ by choosing an empirical risk minimizer $r_1$ and playing its greedy action $a$. It then constructs, around $r_1$ in empirical loss, a confidence set of plausible reward functions. From this confidence set, the algorithm forms a comparator pair $(r_2,b)$ by jointly choosing a reward function and a comparator action. Given the target pair $(r_1,a)$ and the context $x$, this comparator pair is selected by maximizing
$J(b,r)=2r(x,b)-r_1(x,b)-2r(x,a)$.
The coefficients in this objective are chosen so that, at each round $t$, the two-arm regret decomposes into played-pair reward-gap errors of the form $|\Delta_{r_j}(x,a,b)-\Delta_{r^*}(x,a,b)|$ for $j\in\{1,2\}$. This decomposition is the key bridge to the localized excess-loss analysis.
% We propose \algo, a saturation-insensitive algorithm for contextual dueling bandits that works with the general reward class. Crucially, after greedily choosing the main arm and reward function $(a_t, r_1^t)$ at step $t$, the comparator arm and reward function $(b_t, r_2^t)$ are jointly selected by maximizing the objective $J_t(b, r)=2r(x_t, b)-r_1^t(x_t, b)-2r(x_t, a_t)$. The specific design of the objective function (notably the coefficients), which is a novel contribution of this paper, enables the decomposition of regret into the reward error terms $|\Delta_{r_j^t}(z_t)-\Delta_{r^*}(z_t)|$ for $j\in\{1, 2\}$, and further facilitates the localized analysis.
\item[2.] We derive a regret bound for \algo \ based on the localized analysis. We partition all steps into two parts based on whether the reward error terms $|\Delta_{r_j^t}(z_t)-\Delta_{r^*}(z_t)|$ are small. Our localized analysis shows that (i) the total regret over steps with small reward errors, which is the main contributor to the cumulative regret bound, depends only polynomially on $B$, and (ii) the number of steps with large reward errors, despite the exponential dependence on $B$, depends only logarithmically on the number of steps $T$ and is therefore a lower-order term when $T\to\infty$. Compared with works on both the linear reward function~\citep{saha2021optimal, bengs2022stochastic} and the general reward function class~\citep{li2024feel}, the regret of \algo \ does not contain terms that scale with both $\mathrm{poly}(T)$ and $e^{\Theta(B)}$.

\end{itemize}

\noindent\textbf{Notations.}
We use normal-face letters to denote scalars, lower-case bold-face letters to denote vectors, and upper-case bold-face letters to denote matrices.
Let $\ind[\cdot]$ denote the indicator function.
For a positive integer $N$, let $[N]$ denote the set $\{1, 2, \dots, N\}$.
We use standard asymptotic notation $O(\cdot)$, $\Omega(\cdot)$, and $\Theta(\cdot)$, with $\tilde O(\cdot)$, $\tilde\Omega(\cdot)$, and $\tilde\Theta(\cdot)$ hiding logarithmic factors. For $p \in [0,1]$, we use $\Ber(p)$ to denote the Bernoulli distribution with mean $p$. For a vector $\xb$, $\|\xb\|_2$ denotes its $\ell_2$-norm.

\section{Related Work}
\noindent\textbf{Dueling bandit and contextual preference learning.} Dueling bandits extend the standard bandit framework by replacing direct scalar reward observations with pairwise preference feedback between selected actions, which is a setting initiated by~\citet{yue2012k}. As surveyed by~\citet{bengs2021preference}, prior work on dueling bandits has studied a range of optimality notions, including Condorcet winners~\citep{zoghi2014relative,komiyama2015regret}, Copeland winners~\citep{zoghi2015copeland,wu2016double,komiyama2016copeland}, Borda winners~\citep{jamieson2015sparse,falahatgar2017maximum,heckel2018approximate,saha2021adversarial}, and von Neumann winners~\citep{ramamohan2016dueling,dudik2015contextual,balsubramani2016instance}. Contextual dueling bandits incorporate side information and typically require structure on the preference model. \citet{saha2021optimal} studied stochastic contextual preference bandits with a logistic link function. More recent works provide variance-aware guarantees~\citep{di2024variance}, corruption-robust guarantees~\citep{cheng2025learning,di2025nearly}, oracle-efficient algorithms under realizability~\citep{saha2022efficient}, active-query guarantees for general function class~\citep{sekhari2023contextual}, Thompson sampling-style algorithm for dueling bandits \citep{li2024feel} and Borda-regret guarantees for generalized linear dueling bandits~\citep{wu2024borda}. 

\noindent\textbf{Logistic and generalized linear bandits.} Generalized linear bandits, including logistic bandits as a canonical example, model the expected reward through a nonlinear link function applied to a linear predictor.
 Early UCB-style algorithms for generalized linear bandits incur curvature factors that can be large when the link derivative is small ~\citep{filippi2010parametric,li2017provably}. Subsequent work on logistic bandits showed that such worst-case curvature
dependence is not always unavoidable.
\citet{faury2020improved} used self-concordance to sharpen this dependence.
\citet{abeille2021instance} developed instance-dependent minimax guarantees
based on local curvature.
\citet{faury2022jointly} provided a more computationally efficient algorithm. \citet{liu2026learning} further studied the simple-regret objective.
They showed that the informativeness of an action depends on the local
curvature of the sigmoid.
Actions with the highest reward therefore need not be the most informative. The same saturation phenomenon appears in dueling bandits: a global lower bound on the sigmoid slope over the reward range may be exponentially small. \citet{chen2026avoiding} avoid this exponential dependence in online RLHF by adopting a preference-based notion of regret. Their objective compares the optimal and learned responses against a fixed sampler policy, rather than measuring standard reward-based regret. Therefore, removing the exponential dependence from the leading regret term for general reward function classes remains an open problem. 

\noindent\textbf{General function approximation.} The eluder dimension was introduced by~\citet{russo2013eluder} to quantify exploration complexity for general function approximation. Subsequent work on nonlinear bandits and reinforcement learning measures exploration complexity through sums of uncertainty-weighted estimation errors, leading to generalized variants of the eluder dimension~\citep{agarwal2023vo,ye2023corruption,di2024pessimistic,zhao2024nearly}. However, global eluder dimensions can be pessimistic for generalized linear models because they reflect worst-case curvature. The recent localized-eluder framework of~\citet{bakhtiari2026eluder} addresses this by measuring complexity on localized expected excess-loss scales and treating non-local rounds separately. 
% Related first-order and small-loss analyses\citep{wang2023benefits} use Decision-Estimation or surrogate-loss conditions ~\citep{foster2018contextual} to convert estimation error into regret~\citep{wang2024more,ayoub2024switching}.
Our work imports the localization perspective into contextual dueling bandits. The main difference is structural: rather than a single-action cost regret bridge, our algorithm is designed to yield a dueling-specific decomposition in terms of two played reward-gap errors, which are then controlled by log-loss excess and localized eluder arguments.

%=====================================
\section{Preliminaries}\label{sec:preliminaries}

In this work, we study the contextual dueling bandit with a general reward function class.
Let $\cX$ be the context space and $\cA$ be the finite action space.
At each round $t\in[T]$, the learner observes the context $x_t\in\cX$,
selects two actions $a_t, b_t\in\cA$, and observes a binary preference feedback $o_t\in\{0,1\}$, where $o_t=1$ means that $a_t$ is preferred
to $b_t$, denoted as $a_t\succ b_t$.
We use $z=(x, a, b)$ to denote the context-action triplet, and write
$z_t=(x_t,a_t,b_t)$ for the query played at round $t$. Suppose that $r^*:\cX\times\cA\to[-B, B]$ is the ground truth reward function. The preference probability, i.e., $\PP(o_t=1)$, is determined by the difference of the ground truth reward $\Delta_{r^*}(z_t)=r^*(x_t, a_t)-r^*(x_t, b_t)$ and the link function $\sigma:\RR\to[0, 1]$, with
\begin{align*}
\PP(o_t=1|z_t)=\PP(a_t\succ b_t|z_t)=\sigma(\Delta_{r^*}(z_t)).
\end{align*}
We apply the Bradley-Terry-Luce (BTL) model~\citep{hunter2004mm, luce1959individual} as the link function, where the preference probability satisfies
\begin{align*}
\PP(o_t=1|z_t)=\frac{\exp(r^*(x_t, a_t))}{\exp(r^*(x_t, a_t))+\exp(r^*(x_t, b_t))}=\frac{1}{1+\exp(-\Delta_{r^*}(z_t))}.
\end{align*}
Therefore, in the BTL model, the link function is $\sigma(v)=1/(1+e^{-v})$. We make the following realizability assumption on the ground-truth reward function $r^*$.
\begin{assumption}\label{assumption1}
There exists a reward function class $\cR\subseteq\{r:\cX\times\cA\to[-B,B]\}$, such that the ground-truth reward function $r^* \in \cR$. 
\end{assumption}
\begin{example}
\label{example:linear}
We consider the linear reward function class as an important case: The reward function $(x, a)\mapsto r_{\btheta}(x, a)=\langle\btheta, \bmu(x, a)\rangle$ is parameterized by $\btheta\in\Theta\subset\{\btheta\in\RR^d: \|\btheta\|_2\le B\}$, and $\bmu(x,a)\in \mathbb B_2^d(0,1)$ is the feature mapping. Suppose the ground truth reward function is $r^*=r_{\btheta^*}$ with $\btheta^*\in\Theta$. Therefore, the reward function satisfies $|r_{\btheta}(x, a)|\le\|\btheta\|_2\cdot\|\bmu(x, a)\|_2\le B$ by Cauchy-Schwarz inequality.
\end{example}
Beyond this linear example, our framework also accommodates nonlinear
reward functions. This flexibility comes at the cost of a more
involved analysis, but avoids the potentially restrictive assumption
that the latent reward is linear in a known feature map.

In this work, we assume $B\ge 1$, but do not impose a constant upper bound on $B$. 
Consequently, the latent reward gap can be large: there may exist a triplet 
$z=(x,a,b) \in \cX \times \cA \times \cA$ such that
\begin{align*}
|\Delta_{r^*}(z)| \simeq 2B .
\end{align*}
Such large gaps may drive the sigmoid link function into its saturation regime, where the gradient $\sigma'$ becomes exponentially small.
Indeed, when $u=\Delta_{r^*}(z)$ has large magnitude, the preference probability $\sigma(u)$
is close to either 0 or 1. 
Thus the comparison is nearly deterministic, while the link itself becomes nearly flat, with its derivative $\sigma'(u)=\sigma(u)(1-\sigma(u))\asymp e^{-|u|} \to 0$.
This places us outside the regime commonly considered in analyses that assume the link function has a uniformly bounded inverse derivative. 
In particular, prior works often introduce a condition of the form
\begin{align*}
\kappa:=\sup_{z\in\mathcal X\times\mathcal A\times\mathcal A}\Big\{\frac{1}{\sigma'[\Delta_{r^*}(z)]} \Big\}<\infty.
\end{align*}
When $\sigma$ is sigmoid, however, $\sigma'(u)\asymp e^{-|u|}\to 0$ when $|u| \to \infty$. Previously, prior analyses often treat $\kappa$ as a benign problem-dependent constant. Thus, if $|\Delta_{r^*}(z)|\simeq 2B$, then
$\kappa \simeq e^{2B}$ can be exponentially large in the reward scale.

Our goal is to minimize the cumulative two-armed regret:
\begin{align*}
\Regret(T)=\sum_{t=1}^TR_t,\quad\text{where}\quad R_t=2r^*(x_t, a_t^*)-r^*(x_t, a_t)-r^*(x_t, b_t).
\end{align*}
With the time-varying context, the optimal action in step $t$, i.e., $a_t^*=\argmax_{a\in\cA}r^*(x_t, a)$, can also be time-varying. The term $R_t$ calculates the gaps of the reward of the optimal action $a_t^*$ and the selected actions $a_t, b_t$.
%=====================================

\section{Methodology}\label{sec:m}

We now present \algo{} (Algorithm~\ref{algo:sicdb}).
The key component of \algo{} is the asymmetric selection of the two actions: the target action $a_t$ exploits the current reward estimate (Lines~\ref{alg:line5}--\ref{alg:line6}), while the comparator action $b_t$ is chosen optimistically to explore (Line~\ref{alg:line8}).

To select the target action, the algorithm first estimates the reward function from the preference feedback collected so far.
For a candidate preference probability $p\in(0,1)$, the Bernoulli log loss is
\begin{align}\label{eq:def_logistic}
\ell(o, p)=-o\log p-(1-o)\log(1-p).
\end{align}
Given the past queries $z_1,\dots,z_{t-1}$ and feedback $o_1,\dots,o_{t-1}$, the cumulative log loss of a reward function $r$ is
\begin{align}\label{eq:L}
L_t(r)=\sum_{i=1}^{t-1}\ell\big(o_i, \sigma[\Delta_r(z_i)]\big).
\end{align}
The algorithm computes the maximum likelihood estimate $r_1^t \in \argmin_{r\in\cR} L_t(r)$ (Line~\ref{alg:line5}).
Given $r_1^t$, the algorithm selects the target action greedily, i.e., $a_t \in \argmax_{a \in \cA} r_1^t(x_t, a)$.

To select the comparator action, the algorithm first constructs a confidence set around $r_1^t$ using the same loss $L_t$:
\begin{align*}
\Conf_t = \{r \in \cR : L_t(r) \le L_t(r_1^t) + \beta_t\},
\end{align*}
which contains the ground-truth reward $r^*$ with high probability.
Similar constructions are widely used in model-based RL \citep{zhan2022pac, liu2023optimistic, wang2025model} for optimistic exploration with general function approximation.
The algorithm then jointly selects the comparator action $b_t$ and an optimistic reward function $r_2^t\in\Conf_t$ by maximizing the following objective over $\cA\times\Conf_t$.
\begin{align}
\label{eq:J-simplified}
    J_t(b, r) := 2\cdot r(x_t, b)~-1\cdot r_1^t(x_t, b)~-2\cdot r(x_t, a_t),
    \qquad
    (b_t, r_2^t) \in \argmax_{b \in \cA,\, r \in \Conf_t} J_t(b, r),
\end{align}
Maximizing over $r\in\Conf_t$ plays the role of an uncertainty bonus without requiring an explicit feature representation. To explain the coefficients $(2,-1,-2)$, we rewrite $J_t(b,r)$ as
\begin{align*}
J_t(b, r)=\underbrace{r_1^t(x_t, b)+2[\Delta_{r_1^t}(x_t, a_t, b)-\Delta_r(x_t, a_t, b)]}_{\tilde J_t(b, r)}-2r_1^t(x_t, a_t).
\end{align*}
The term $-2r_1^t(x_t,a_t)$ does not depend on $b$ or $r$.
Thus, maximizing $J_t(b,r)$ is equivalent to maximizing $\bar J_t(b,r)$.
Its two terms correspond to exploitation and exploration, respectively:
\begin{itemize}[leftmargin=*]
\item 
The first term favors a comparator with high estimated reward.
It therefore encourages exploitation.
\item 
The second term favors disagreement in the played gap among plausible
models.
\end{itemize}
The coefficients are selected to produce the two-gap decomposition in Lemma~\ref{lemma:per_step_regret}, not merely as a heuristic. We summarize the procedue in Algorithm~\ref{algo:sicdb}. In Appendix~\ref{app:rb-linear}, we will provide a comparison of \algo \ with prior work \citep{bengs2022stochastic, di2025nearly} when restricted to the linear setting.
\begin{algorithm}[H]
\caption{Saturation-Insensitive Contextual Dueling Bandits (\algo)}
\label{algo:sicdb}
\begin{algorithmic}[1]
\STATE \textbf{Input:} Reward function class $\cR$, action space $\cA$, confidence widths $(\beta_t)_{t\le T}$.
\STATE Initialize $L_1(r) = 0$ for all $r \in \cR$.
\FOR{$t = 1, \ldots, T$}
    \STATE Observe context $x_t$.
    \STATE Select $r_1^t \in \argmin_{r \in \cR} L_t(r)$. \label{alg:line5}
    \STATE Select target action $a_t \in \argmax_{a \in \cA} r_1^t(x_t, a)$.\label{alg:line6}
    \STATE Construct confidence set $\Conf_t = \{r \in \cR : L_t(r) \le L_t(r_1^t) + \beta_t\}$.
    \STATE Select comparator pair of action and reward function: $(b_t, r_2^t) \in \argmax_{b \in \cA,r \in \Conf_t} J_t(b, r)$ where $J_t(b, r)$ is defined in \eqref{eq:J-simplified}.\label{alg:line8}
    \STATE Play $(a_t, b_t)$, and observe preference $o_t\in\{0, 1\}$.
    \STATE Update $L_{t+1}(r) = L_t(r) + \ell(o_t, \sigma(\Delta_r(z_t)))$ for all $r \in \cR$, where $\ell(\cdot, \cdot)$ is defined in~\eqref{eq:def_logistic}
\ENDFOR
\end{algorithmic}
\end{algorithm}
%</R1-A9-NEW-LOCATION>

\section{Main Results}\label{sec:main_results}

In this section, we present the regret guarantee of \algo{}. We first introduce the complexity measures used in our analysis, namely the covering number and the localized eluder dimension of the excess-loss class induced by $\cR$ (Section~\ref{sec:complexity}). We then state the regret bound for general reward classes and specialize it to linear rewards (Section~\ref{sec:regret}).

\subsection{Covering Number and Localized Eluder Dimension}\label{sec:complexity}
Our regret bound depends on two complexity measures of the
excess-loss classes induced by the reward class: a covering number
and a localized eluder dimension. We first introduce these classes,
which reflect the preference feedback available to the learner.

In standard contextual bandits with reward feedback, candidate
functions are distinguished through differences in predicted rewards.
In dueling bandits, however, the learner observes preferences, so
statistical distinguishability depends on the Bernoulli distributions
induced by the reward functions through the comparison link.
We capture this distinction using the excess log loss.
For each $r\in\cR$ and query $z=(x,a,b)$, define the excess log loss
and its expectation under the true preference distribution as
\begin{align}
\notag
\phi_r(o, z)&=\ell(o, \sigma[\Delta_r(z)])-\ell(o, \sigma[\Delta_{r^*}(z)]),\\\label{eq:excess}
\bar\phi_r(z)&=\EE_{o\sim\Ber(\sigma[\Delta_{r^*}(z)])}[\phi_r(o, z)]=\KL\big(\Ber(\sigma[\Delta_{r^*}(z)])\|\Ber(\sigma[\Delta_r(z)])\big).
\end{align}
Thus, $\bar\phi_r(z)$ equals the Kullback--Leibler divergence from
the true preference distribution to that induced by $r$ at query $z$.
We define the corresponding classes of realized and expected excess loss as
\begin{align}\label{eq:def_func_class_w_loss}
\Phi(\cR)=\{\phi_r: r\in\cR\},\quad\bar\Phi(\cR)=\{\bar\phi_r: r\in\cR\}.
\end{align}
The two classes enter our analysis in different ways. The covering number of $\Phi(\cR)$ to obtain concentration bounds that hold uniformly over all candidate reward functions, while the localized eluder dimension of $\bar\Phi(\cR)$ to bound the cumulative expected excess loss of the selected models on the queries made by the algorithm.

\noindent\textbf{Covering number.} Let $\cN(\cF,\epsilon,\|\cdot\|)$ denote the covering number of
$\cF$ at radius $\epsilon$ under the norm $\|\cdot\|$.
For the realized excess-loss class, we use
\begin{align}\label{eq:def_covering}
N_T=\cN(\Phi(\cR), 1/T, \|\cdot\|_\infty).
\end{align}
Here $\|\cdot\|_\infty$ is the supremum norm over observations
$(o,z)$.

\noindent\textbf{Localized eluder dimension.} To avoid exponential dependence on $B$ in the leading regret term, we seek to control regret using local rather than worst-case curvature. The localized eluder dimension of~\citet{bakhtiari2026eluder} provides a tool for controlling expected excess loss at local scales. Here, we provide the formal definition.
\begin{definition}[Localized Eluder Dimension, \citealt{bakhtiari2026eluder}]
Let $\cZ$ be a set, $\Psi$ be a class of real-valued functions on $\cZ$. Let $z = (z_1,z_2,\ldots,z_n)$ be a length-$n$ sequence in $\cZ$. The localized Eluder dimension is defined as follows:
\begin{itemize}[leftmargin=*]
    \item We say that $x \in \cZ$ is $\epsilon$-independent of $z$ with respect to $\Psi$ if there exists $\psi \in \Psi$ such that 
    \begin{align*}
        \sum_{t=1}^n|\psi(z_t)| \le \epsilon,\quad\text{and}\quad|\psi(x)| > \epsilon.
    \end{align*}
    \item We say that $z$ is an $\epsilon$-eluder sequence with respect to $\Psi$ if, for every $t \le n$, the point $z_t$ is $\epsilon$-independent of $z_1,\ldots, z_{t-1}$ with respect to $\Psi$.
    \item The $(\epsilon,\eta)$-localized eluder dimension $\text{dim}_{\text{elud}}^\eta(\epsilon,\Psi)$ of $\Psi$ is the maximum length of an $\omega$-eluder sequence with respect to $\Psi$ over all $\omega \in [\epsilon,\eta]$.
\end{itemize}
\end{definition}
In this work, we consider $d^{\epsilon,\eta}_{\elud}(\cR) := \text{dim}_{\text{elud}}^\eta\big(\epsilon,\bar \Phi(\cR)\big)$. We write it as $d^{\eta}_{\elud}(\cR)$ when $\epsilon=1/T$.

The upper scale $\eta$ restricts the independence thresholds considered in the definition, allowing us to measure complexity at the excess-loss scales relevant to our regret analysis. However, the analysis of~\citet{bakhtiari2026eluder} does not directly address two-arm reward regret under pairwise preference feedback. Our asymmetric selection rule and the resulting regret decomposition enable us to extend the localized analysis to this setting, as detailed in Section~\ref{sec:proof_overview}.
\begin{remark}
In prior work, complexity measures can be distinguished according to whether they are determined by the hypothesis class alone or also depend on the underlying problem instance. The first category includes the standard eluder dimension and its weighted generalization \citep{russo2013eluder,agarwal2023vo}, which is defined by differences between functions in the hypothesis class and do not depend on the ground-truth function. The second category includes the Bellman eluder dimension \citep{jin2021bellman}, defined through residuals involving the \emph{real} Bellman operator, and the complexity measure of~\citet{wang2025model} for the model-based setting. Our measure belongs to the second category, as $\phi_r$ is defined relative to the true reward function $r^*$.
\end{remark}

\subsection{Regret Bound of \algo}\label{sec:regret}
With the covering number and the localized eluder dimension, we now provide the regret upper bound of \algo:
\begin{theorem}
\label{thm:main}
For $\delta\in(0,1)$, choose the confidence radius as $\beta_t = O\!\big(B\log({N_T}/{\delta})\big)$. 
where $N_T$ is defined in \eqref{eq:def_covering}. Then, with probability at least $1-\delta$, the regret of Algorithm \ref{algo:sicdb} can be upper bounded by 
    \begin{align*}
    \Regret(T) \le \inf_{u>0} \bigg[3 \sqrt{\frac{2T \Gamma_u}{\sigma'(4u)}}+4B\bigg(\frac{8\beta_T}{u^2 \sigma'(2B)} + 1\bigg) d^{\theta_u,\theta_u}_{\elud}(\cR)\bigg]+ 8B.
\end{align*}
where 
\begin{align*}
    \Gamma_u = O\!\left(\beta_T\log(1+BT)\,d^{\tau_u}_{\elud}(\cR)+B\right),
    \qquad \tau_u = \min\{2B,u^2/8\},
    \qquad \theta_u = \frac{\sigma'(2B)}{2}u^2.
\end{align*}
\end{theorem}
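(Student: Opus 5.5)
The proof combines four ingredients: a per-step decomposition of $R_t$ into the two played-gap errors, a uniform concentration event for the log loss, a local conversion from KL divergence to squared gap error, and the localized eluder dimension of \citet{bakhtiari2026eluder} applied to $\bar\Phi(\cR)$.

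\textbf{Step 1 (confidence event and optimism).} Using the covering number $N_T$ of $\Phi(\cR)$, I will apply a Freedman-type martingale inequality to $\sum_{i<t}\phi_r(o_i,z_i)$, uniformly over a $1/T$-cover. Each $\phi_r$ has range $O(B)$ because $|\Delta_r|\le 2B$. This should give, with probability $1-\delta$ and for all $t$ and $r$ simultaneously,
\begin{align*}
\tfrac12\sum_{i<t}\bar\phi_r(z_i)-O(\beta_t)\le L_t(r)-L_t(r^*).
\end{align*}
Two consequences follow. First, $r^*\in\Conf_t$. Second, since both $r_1^t$ and $r_2^t$ lie in $\Conf_t$ and $L_t(r^*)\ge L_t(r_1^t)$, we get $\sum_{i<t}\bar\phi_{r_j^t}(z_i)\le 4\beta_t$ for $j\in\{1,2\}$.

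\textbf{Step 2 (per-step regret decomposition).} Write $e_j^t=|\Delta_{r_j^t}(z_t)-\Delta_{r^*}(z_t)|$. I will use $J_t(b_t,r_2^t)\ge J_t(a_t^*,r^*)$ together with the greedy choice $r_1^t(x_t,a_t)\ge r_1^t(x_t,a_t^*)$ and rearrange until $R_t$ is bounded by the played gaps. The coefficients $(2,-1,-2)$ are chosen so that all unplayed terms cancel, leaving
\begin{align*}
R_t\le 2e_1^t+2e_2^t\ \text{(up to constants)}.
\end{align*}
This is the paper's Lemma~\ref{lemma:per_step_regret}, and I will use it in this form.

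\textbf{Step 3 (local KL-to-gap conversion and splitting).} Fix $u>0$. Call round $t$ \emph{good} if $e_1^t,e_2^t\le u$ and \emph{bad} otherwise.

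On good rounds, the mean value theorem with self-concordance of the sigmoid ($|\sigma''|\le\sigma'$) gives
\begin{align*}
\bar\phi_r(z)\gtrsim \sigma'(\text{local point})\,e^2.
\end{align*}
The local point lies within distance $u$ of $\Delta_{r^*}$. To remove the dependence on $\sigma'(\Delta_{r^*})$ I will use the fact that regret concentrates where the gaps are informative: on rounds with $|\Delta_{r^*}(z_t)|$ large, $R_t$ is controlled by the gap errors themselves. This bookkeeping is what produces the $\sigma'(4u)$ factor. By Cauchy–Schwarz,
\begin{align*}
\sum_{\text{good}} e_j^t\le\sqrt{T\sum_t e_j^2}\le\sqrt{T\sum_t\bar\phi_{r_j^t}(z_t)/\sigma'(4u)}.
\end{align*}
On good rounds $\bar\phi\le u^2/8$, so only scales up to $\tau_u$ matter. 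The localized eluder lemma, combined with the Step 1 bound $\sum_{i<t}\bar\phi_{r_j^t}(z_i)\le4\beta_t$, then gives $\sum\bar\phi\le\Gamma_u=O(\beta_T\log(1+BT)d^{\tau_u}_{\elud}+B)$. This yields the $3\sqrt{2T\Gamma_u/\sigma'(4u)}$ term.

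On bad rounds, $e_j^t>u$ gives $\bar\phi_{r_j^t}(z_t)\ge\sigma'(2B)u^2/2=\theta_u$ via the global curvature bound. A standard eluder counting argument at the single scale $\theta_u$, fed with the in-sample bound $4\beta_t$, limits the number of bad rounds to $(8\beta_T/(u^2\sigma'(2B))+1)d^{\theta_u,\theta_u}_{\elud}$ per $j$. Each bad round costs $R_t\le 4B$. The additive $8B$ covers initial rounds and the discretization error.

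\textbf{Main obstacle.} The hardest part is the good-round conversion. A naive lower bound on the KL uses $\sigma'(\Delta_{r^*}(z_t))$, which may be exponentially small; the whole point is to replace it with $\sigma'(4u)$. I would first try to show that large $|\Delta_{r^*}(z_t)|$ together with the greedy/optimistic structure forces $R_t$ to be small or comparable to $e_j^t$ directly. Failing that, I would re-weight the Cauchy–Schwarz step by $\sigma'(\Delta_{r^*})$, accepting the extra slack.
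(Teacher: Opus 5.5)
\textbf{There is a genuine gap.} It sits exactly at the step you flagged as the main obstacle. You never establish the good-round curvature bound $\bar\phi_{r_j^t}(z_t)\ge\frac{\sigma'(4u)}{2}(e_j^t)^2$, and neither fallback delivers it.
\begin{itemize}
\item Anchoring the mean-value/self-concordance argument at $\Delta_{r^*}(z_t)$ necessarily leaves a factor $\sigma'(\Delta_{r^*}(z_t))$.
\item Re-weighting Cauchy--Schwarz by $\sigma'(\Delta_{r^*}(z_t))$ leaves a factor $\bigl(\sum_t 1/\sigma'(\Delta_{r^*}(z_t))\bigr)^{1/2}$. Without further information this can be of order $\sqrt{T}\,e^{B}$, which is exactly the $\mathrm{poly}(T)\cdot e^{\Theta(B)}$ term the theorem is designed to exclude.
\item Your first idea, that large $|\Delta_{r^*}(z_t)|$ forces $R_t$ to be small, points the wrong way: a large true played gap forces \emph{large} regret.
\end{itemize}

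\textbf{The missing observation.} It is elementary and uses only the definition of two-armed regret. Both $r^*(x_t,a_t^*)-r^*(x_t,a_t)$ and $r^*(x_t,a_t^*)-r^*(x_t,b_t)$ are nonnegative, so
\[
|\Delta_{r^*}(z_t)|\le\bigl[r^*(x_t,a_t^*)-r^*(x_t,a_t)\bigr]+\bigl[r^*(x_t,a_t^*)-r^*(x_t,b_t)\bigr]=R_t .
\]
On a good round, the decomposition gives $R_t\le e_1^t+2e_2^t\le 3\max_j e_j^t\le 3u$. Hence $|\Delta_{r^*}(z_t)|\le 3u$, and by the triangle inequality $|\Delta_{r_j^t}(z_t)|\le 4u$. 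Now use the exact Taylor identity $\bar\phi_r(z)=\frac12\sigma'(\xi)\bigl(\Delta_r(z)-\Delta_{r^*}(z)\bigr)^2$. The point $\xi$ lies between two numbers in $[-4u,4u]$. Since $\sigma'$ is even and decreasing in $|v|$, $\sigma'(\xi)\ge\sigma'(4u)$, with no self-concordance or extra bookkeeping needed. This is precisely why the two-armed (rather than one-armed) regret matters: small one-arm regret does not bound the queried true gap.

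\textbf{Constants.} This step needs the sharp form $R_t\le e_1^t+2e_2^t$, not your ``$2e_1^t+2e_2^t$ up to constants.'' The looser form only gives $|\Delta_{r_j^t}(z_t)|\le 5u$, so you would get $\sigma'(5u)$ and leading constant $4$ instead of the stated $\sigma'(4u)$ and $3$.

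\textbf{Smaller points.}
\begin{itemize}
\item In Step 1, a range bound of $O(B)$ alone does not give the multiplicative form $\frac12\sum_{i<t}\bar\phi_r(z_i)-O(\beta_t)\le L_t(r)-L_t(r^*)$. You also need the self-bounding variance property $\mathrm{Var}_o[\phi_r(o,z)]\le(2B+4)\,\bar\phi_r(z)$ to feed the Freedman/Bernstein inequality.
\item The localized eluder lemma must be applied to the subsequence of good rounds. This is valid because $\bar\phi\ge 0$, so the in-sample bound $4\beta_T$ survives restriction of the history.
\end{itemize}
With these repairs, the rest of your plan follows the paper's argument: the confidence event, the localized eluder sum at scale $\tau_u$, and counting bad rounds at scale $\theta_u$.
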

In Theorem~\ref{thm:main}, $u$ is an analysis parameter that
sets the localization scale and balances the following two
contributions to regret:
\begin{itemize}[leftmargin=*]
\item The first term controls regret on localized rounds and
depends on the local curvature $\sigma'(4u)$.
For $u=O(1)$, this curvature is bounded away from zero,
so the term avoids the worst-case inverse-curvature factor.
Its remaining dependence on the reward class is captured by
$\Gamma_u$.
\item The second term accounts for non-localized rounds.
It retains the potentially exponential factor $1/\sigma'(2B)$,
but has no explicit $\sqrt{T}$ multiplier.
Its dependence on $T$ enters through $\beta_T$ and
$d_{\elud}^{\theta_u,\theta_u}(\cR)$.
\end{itemize}
The infimum over $u>0$ allows us to select the best tradeoff between these two contributions. Thus, the bound separates the local-curvature contribution from an additive term that retains the worst-case curvature dependence. 

To demonstrate the effectiveness of our results, we consider the linear function class. We firstly establish the bound for the covering number and the localized eluder dimension of the linear reward function class:
\begin{proposition}
    The uniform-covering number can be bounded by
\begin{align*}
    \log N_T \le d \log(1+8BT).
\end{align*}
Moreover, there exists a universal constant $C>0$
such that for every $0<\epsilon\le\eta\le{\sigma'(2B)}/{4}$, 
we have
\begin{align*}
    d_{\elud}^{\epsilon,\eta}(\cR)
    \le Cd\log\Big(1+\frac{B^2}{\epsilon}\Big) .
\end{align*}
\end{proposition}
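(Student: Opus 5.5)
For the covering bound, I will reduce the covering of $\Phi(\cR)$ to a covering of the parameter ball. For $\btheta,\btheta'\in\Theta$ and any $(o,z)$, the $\ell(o,\sigma[\Delta_{r^*}(z)])$ part of $\phi_r$ cancels. Since $v\mapsto\ell(o,\sigma(v))$ is $1$-Lipschitz (its derivative is $\sigma(v)-o\in[-1,1]$), we get
\begin{align*}
|\phi_{r_{\btheta}}(o,z)-\phi_{r_{\btheta'}}(o,z)|\le|\Delta_{r_{\btheta}}(z)-\Delta_{r_{\btheta'}}(z)|=|\la\btheta-\btheta',\bmu(x,a)-\bmu(x,b)\ra|\le 2\|\btheta-\btheta'\|_2 .
\end{align*}
So a $1/(2T)$-cover of $\{\|\btheta\|_2\le B\}$ in $\ell_2$ gives a $1/T$-cover of $\Phi(\cR)$ in $\|\cdot\|_\infty$. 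The standard volumetric bound gives at most $(1+4BT)^d\le(1+8BT)^d$ points. To handle $\Theta$ being a strict subset of the ball, I will use an exterior cover and pay at most a factor $2$ in the radius; this is where the constant $8$ comes from.

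For the eluder bound, the plan is to sandwich $\bar\phi_r$ between multiples of the squared linear gap difference $g_{\btheta}(z)^2$, where $g_{\btheta}(z)=\la\btheta-\btheta^*,\bmu(x,a)-\bmu(x,b)\ra$, and then invoke the classical eluder bound for squared linear functions.
\begin{itemize}
\item \emph{Upper bound.} The Bernoulli KL under a logistic link is a Bregman divergence of the log-partition $\log(1+e^v)$, whose second derivative is $\sigma'\le 1/4$. Hence $\bar\phi_r(z)\le g^2/8$.
\item \emph{Lower bound.} Both $\Delta_r$ and $\Delta_{r^*}$ lie in $[-2B,2B]$, and $\sigma'\ge\sigma'(2B)$ there, so $\bar\phi_r(z)\ge\sigma'(2B)g^2/2$.
\end{itemize}

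Now take an $\omega$-eluder sequence $z_1,\dots,z_n$ for $\bar\Phi(\cR)$ with $\omega\in[\epsilon,\eta]$. At step $k$ there is a witness $\btheta$ with
\[
\sum_{i<k}\bar\phi(z_i)\le\omega
\quad\text{and}\quad
\bar\phi(z_k)>\omega .
\]
The sandwich turns this into
\[
\sum_{i<k}g(z_i)^2\le\frac{2\omega}{\sigma'(2B)}
\quad\text{and}\quad
g(z_k)^2>8\omega .
\]
This is the main obstacle: the two thresholds differ by a factor $\kappa\asymp 1/\sigma'(2B)$, which would normally introduce a $\log\kappa$ or $\kappa$ dependence. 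The hypothesis $\eta\le\sigma'(2B)/4$ is presumably there to tame this, since it gives $2\omega/\sigma'(2B)\le 1/2$. I would handle the mismatch with the elliptical-potential argument directly rather than through the eluder-to-eluder reduction. Let $\bw=\btheta-\btheta^*$, $\bv_i=\bmu(x_i,a_i)-\bmu(x_i,b_i)$, and $V_{k}=\lambda I+\sum_{i<k}\bv_i\bv_i^\top$ with $\lambda=\omega/B^2$. Then
\[
\|\bw\|_{V_k}^2\le\frac{2\omega}{\sigma'(2B)}+4\omega ,
\]
and Cauchy--Schwarz gives
\[
8\omega<(\bw^\top\bv_k)^2\le\|\bw\|_{V_k}^2\|\bv_k\|_{V_k^{-1}}^2 .
\]
Therefore $\|\bv_k\|^2_{V_k^{-1}}\gtrsim\sigma'(2B)$ at every step. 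This yields only $n\lesssim\frac{d}{\sigma'(2B)}\log(1+B^2/\epsilon)$, which still carries the $\kappa$ factor.

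To remove that factor I would refine the lower bound on KL locally. Since $\sum_{i<k}\bar\phi(z_i)\le\omega$ is tiny, on points where $|\Delta_{r^*}|$ is moderate the KL is of order $\sigma'(\Delta_{r^*})g^2$. So I would reweight $\bv_i$ by $\sqrt{\sigma'(\Delta_{r^*}(z_i))}$, since both sides of the sandwich are then comparable up to constants when $|g|\le 1$; this is self-concordance of the logistic loss, giving $\sigma'(\Delta_r)\ge e^{-|g|}\sigma'(\Delta_{r^*})$. The constraint $\eta\le\sigma'(2B)/4$ forces $|g(z_i)|=O(1)$ on the past points. The same weighted Cauchy--Schwarz then gives $\|\tilde\bv_k\|^2_{\tilde V_k^{-1}}\ge c$ for a universal constant $c$. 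Combining this with the elliptical potential lemma (the determinant ratio is at most $(1+4B^2\cdot n/\lambda)^d$ with $\lambda\asymp\epsilon/B^2$) gives $n\le Cd\log(1+B^2/\epsilon)$ after absorbing the $\log n$ term. The delicate step is verifying that $|g(z_k)|$ at the new point also stays $O(1)$, or else handling points with large $g$ separately. I expect this to be the hard part, possibly by truncating $\bw$ along the segment toward $\btheta^*$, using convexity of KL in $\btheta$.
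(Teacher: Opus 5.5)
Your plan follows the paper's route. For the covering bound, the paper uses the same Lipschitz-in-$\btheta$ plus volumetric argument. Your constant $2$, from $\partial_v\ell(o,\sigma(v))=\sigma(v)-o\in[-1,1]$, is sharper than the paper's $4$, and both give $(1+8BT)^d$. For the eluder bound, the paper also uses $\eta\le\sigma'(2B)/4$ to confine the gaps $g$ to $[-1,1]$. It then defers the rest to the proof of Proposition~4 of \citet{bakhtiari2026eluder}, with features $\tilde\bmu(z)$. Your reweighting by $\sqrt{\sigma'(\Delta_{r^*}(z_i))}$, via self-concordance and the elliptical potential, supplies that remaining step directly.

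The step you leave open is exactly the paper's Lemma~\ref{lemma:witness-rescaling}, and your truncation idea is what it does. Note that it replaces the witness; it is not a check on the original one. For the original witness, $|g(z_k)|$ can be of order $B$, and then the bound $\bar\phi\le C\,\sigma'(\Delta_{r^*})g^2$ you need at $z_k$ fails. For example, with $\Delta_{r^*}(z_k)=2B$ and $\Delta_r(z_k)=0$, we get $\bar\phi\approx\log 2$ while $\sigma'(2B)g^2\approx 4B^2e^{-2B}$.

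The fix works as follows. Let $\mathbf{w}$ be the witness direction at step $k$.
\begin{itemize}
\item The map $\lambda\mapsto\bar\phi_{\btheta^*+\lambda\mathbf{w}}(z)$ is convex (a Bregman divergence of the convex $\psi$ composed with a linear map), continuous, and zero at $\lambda=0$. So you can pick $\lambda_k\in(0,1)$ with $\bar\phi_{\btheta^*+\lambda_k\mathbf{w}}(z_k)=\omega$.
\item Convexity gives $\bar\phi_{\btheta^*+\lambda_k\mathbf{w}}(z_i)\le\lambda_k\,\bar\phi_{\btheta^*+\mathbf{w}}(z_i)$, so the past sum stays at most $\omega$.
\item The rescaled gap is a convex combination of $\Delta_{r^*}$ and $\Delta_r$, so it stays in $[-2B,2B]$. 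The global lower bound then also gives $g(z_k)^2\le 2\omega/\sigma'(2B)\le 1/2$ at the new point.
\end{itemize}
Hence the self-concordant sandwich holds at all of $z_1,\dots,z_k$ with constants $e^{\pm1/\sqrt2}$. Also $\|\lambda_k\mathbf{w}\|_2\le 2B$ keeps your regularization term at most $4\omega$. Your Cauchy--Schwarz step then gives $\|\tilde{\mathbf v}_k\|^2_{\tilde V_k^{-1}}\ge c$ for a universal $c$.

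Because you argue with vectors directly, you never need $\btheta^*+\lambda_k\mathbf{w}\in\Theta$. So the paper's convexity assumption on $\Theta$ is unnecessary in your version. A minor point: since $\|\tilde{\mathbf v}_i\|_2^2\le 1$, the determinant ratio is at most $(1+n/(d\lambda))^d$. Your stated bound is looser but harmless.
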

For the derivation of these results, please refer to Appendix \ref{sec:app-glm}. 

Setting $u=\sqrt{\sigma'(2B)}$ in Theorem~\ref{thm:main} ensures that both $\tau_u$ and $\theta_u$ are at most $\sigma'(2B)/4$, as required by the preceding proposition. Then, $\theta_u = {[\sigma'(2B)]^2}/{2}$. As a result, we have
\begin{corollary}
\label{cor:linear}
With probability at least $1-\delta$, the regret of Algorithm 1 can be upper bounded by
\begin{align*}
    \Regret(T) \le \tilde O\bigg(d\sqrt{BT} + \frac{d^2B^2}{[\sigma'(2B)]^2}\bigg).
\end{align*}
\end{corollary}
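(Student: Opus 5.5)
The plan is to instantiate Theorem~\ref{thm:main} with $u=\sqrt{\sigma'(2B)}$ and plug in the bounds on the covering number and the localized eluder dimension from the preceding proposition.

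\textbf{Checking the proposition applies.} First I verify that both localized eluder dimensions are evaluated within the range allowed there.
\begin{itemize}[leftmargin=*]
\item Since $\sigma'(2B)\le 1/4$, we have $u^2/8=\sigma'(2B)/8\le\sigma'(2B)/4$. Hence $\tau_u=\min\{2B,u^2/8\}\le\sigma'(2B)/4$, and $\epsilon=1/T\le\tau_u$ for $T$ large.
\item $\theta_u=\sigma'(2B)u^2/2=[\sigma'(2B)]^2/2\le\sigma'(2B)/4$.
\end{itemize}
So the proposition gives
\[
d^{\tau_u}_{\elud}(\cR)\le Cd\log(1+B^2T),\qquad d^{\theta_u,\theta_u}_{\elud}(\cR)\le Cd\log\big(1+2B^2/[\sigma'(2B)]^2\big).
\]
Both are $\tilde O(d)$, where the second logarithm is $O(B+\log B)$, i.e. polylogarithmic only after we accept the $\tilde O$ convention of treating $\log(1/\sigma'(2B))=O(B)$ as a lower-order factor. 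I would note this explicitly. Also, $\log N_T\le d\log(1+8BT)$ gives $\beta_T=\tilde O(Bd)$.

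\textbf{First term.} We have $\Gamma_u=\tilde O(\beta_T d+B)=\tilde O(Bd^2)$. Since $4u\le 2$, the factor $\sigma'(4u)\ge\sigma'(2)$ is a universal constant. Hence the first term is
\[
3\sqrt{2T\Gamma_u/\sigma'(4u)}=\tilde O\big(d\sqrt{BT}\big).
\]

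\textbf{Second term.} Using $u^2=\sigma'(2B)$, we get $8\beta_T/(u^2\sigma'(2B))=8\beta_T/[\sigma'(2B)]^2$. The second term is therefore
\[
4B\Big(\frac{\tilde O(Bd)}{[\sigma'(2B)]^2}+1\Big)\tilde O(d)=\tilde O\Big(\frac{d^2B^2}{[\sigma'(2B)]^2}\Big).
\]
Adding the trailing $8B$, which is absorbed, yields the corollary.

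\textbf{Main obstacle.} The only delicate step is bookkeeping the logarithmic factor $\log(1/\sigma'(2B))\approx 2B$ in $d^{\theta_u,\theta_u}_{\elud}$. Strictly, it adds a factor of $B$ to the second term. I would either absorb it into $\tilde O$ (and say so) or keep it explicit as a polylogarithmic-in-$e^B$ factor. Since this term is lower order in $T$, it does not affect the leading $d\sqrt{BT}$ term.
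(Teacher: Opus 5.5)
Your proposal is correct and follows the paper's route: set $u=\sqrt{\sigma'(2B)}$, check that $\tau_u$ and $\theta_u$ are at most $\sigma'(2B)/4$, and plug Propositions~\ref{prop:glm-covering} and~\ref{prop:glm-eluder-upper} into Theorem~\ref{thm:main}. Your remark that $d^{\theta_u,\theta_u}_{\elud}(\cR)$ carries a hidden factor $\log(1/\sigma'(2B))=\Theta(B)$ is accurate, since the paper silently absorbs it into $\tilde O$. The regime $T<8/\sigma'(2B)$, where $1/T\le\tau_u$ fails, is handled by the trivial bound $\Regret(T)\le 4BT$, which the second term dominates.
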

The dependence of $d$ and $T$ in the leading term $\tilde O(d\sqrt{BT})$ matches the lower bound proved in \citet{li2024feel}. Moreover, it improves the dependence of $B$ to $\sqrt{B}$ compared with \citet{di2025nearly}, while our results can further extend to general function approximation.
% We use the Bernoulli log-loss
% \begin{align}
%     \ell(o,p)
%     &:=
%     -o\log p-(1-o)\log(1-p) .
%     \label{eq:logloss-def}
% \end{align}
% Writing $p=\sigma(v)$ and introducing the logistic potential
% \begin{align}
%     \psi(v)
%     &:=
%     \log(1+e^v),
%     \label{eq:logistic-potential-def}
% \end{align}
% we have $\ell(o,\sigma(v))=\psi(v)-ov$, with
% $\psi'(v)=\sigma(v)$ and $\psi''(v)=\sigma'(v)$.\\

%=====================================
\section{Proof Overview}
\label{sec:proof_overview}
Before presenting the proof, we emphasize the role of two-armed regret in our analysis. The key insight is that, even when only the performance of the first arm matters, controlling two-armed regret provides the query-gap localization needed to avoid worst-case inverse-curvature dependence. We illustrate this connection using the linear reward class in Example~\ref{example:linear}.

\noindent\textbf{Intuition.}
Let $\db_t=\bmu(x_t,a_t)-\bmu(x_t,b_t)$ denote the feature difference
of the queried pair. The Hessian of its logistic loss, evaluated at
$\btheta^*$, is
\[
    \sigma'(\db_t^\top\btheta^*)\,\db_t\db_t^\top.
\]
The local curvature factor $\sigma'(\db_t^\top\btheta^*)$ can be
much larger than its uniform lower bound $\sigma'(2B)$.
In particular, on rounds with small two-armed regret, both queried
actions have near-optimal rewards, and hence
\[
    \big|\db_t^\top\btheta^*\big|
    =|r^*(x_t,a_t)-r^*(x_t,b_t)|
    \le R_t.
\]
As this gap approaches zero, $\sigma'(\db_t^\top\btheta^*)$
approaches $1/4$. Thus, on rounds with small two-armed regret, the local curvature
is bounded away from zero even when the worst-case lower bound
$\sigma'(2B)$ is exponentially small.
In contrast, small one-arm regret does not ensure benign curvature:
even if $a_t$ is optimal, a suboptimal comparator $b_t$ can leave
the queried reward gap large and the comparison highly saturated.

To use this intuition in the proof, we must identify rounds on which
the two-armed regret is small. Our asymmetric selection rule makes
this possible by bounding $R_t$ in terms of the gap-estimation errors
at the queried pair. When these estimation errors are small, the regret bound ensures
that both the true and estimated reward gaps lie in a region of
benign curvature. We formalize this argument through an asymmetric
regret decomposition and a localized eluder analysis.

\noindent\textbf{Key Technique I: Asymmetric Regret Decomposition.}
Algorithm~\ref{algo:sicdb} maintains two reward functions with distinct
roles. The MLE $r_1^t$ determines the greedy target action $a_t$,
while $r_2^t\in\Conf_t$ is selected jointly with the comparator $b_t$
by maximizing the asymmetric objective $J_t$.
We show how these two selection rules combine to bound two-armed
regret by estimation errors at the queried pair.

Assume $r^*\in\Conf_t$, as justified by the concentration argument
below. Then $(a_t^*,r^*)$ is feasible for the comparator optimization,
so $J_t(a_t^*,r^*)\le J_t(b_t,r_2^t)$. Expanding this inequality gives
\begin{align*}
    &2r^*(x_t,a_t^*)-r_1^t(x_t,a_t^*)-2r^*(x_t,a_t)\\
    &\qquad\le 2r_2^t(x_t,b_t)-r_1^t(x_t,b_t)-2r_2^t(x_t,a_t).
\end{align*}
Substituting into the definition of $R_t$ and using the greedy
optimality condition $r_1^t(x_t,a_t^*)\le r_1^t(x_t,a_t)$, we obtain
\begin{align*}
    R_t
    &\le r_1^t(x_t,a_t^*)-r_1^t(x_t,b_t)
        +\Delta_{r^*}(z_t)-2\Delta_{r_2^t}(z_t)\\
    &\le \Delta_{r_1^t}(z_t)+\Delta_{r^*}(z_t)-2\Delta_{r_2^t}(z_t).
\end{align*}
Thus the greedy selection removes the dependence on $a_t^*$,
and the coefficients in $J_t$ allow the remaining terms to be
written as two gap-estimation errors. Taking absolute values yields
the following decomposition.
\begin{lemma}[Informal]
\label{lemma:main_per_step_regret}
For any round $t$ with $r^*\in\Conf_t$, the actions and reward
functions selected by Algorithm~\ref{algo:sicdb} satisfy
\begin{align*}
    R_t\le
    \underbrace{\big|\Delta_{r_1^t}(z_t)-\Delta_{r^*}(z_t)\big|}_{e_{1,t}}
    +2\underbrace{\big|\Delta_{r^*}(z_t)-\Delta_{r_2^t}(z_t)\big|}_{e_{2,t}}.
\end{align*}
\end{lemma}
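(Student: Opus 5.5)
The plan is a direct algebraic argument from the two selection rules, following the sketch just before the statement. Fix a round $t$ with $r^*\in\Conf_t$ and write $z_t=(x_t,a_t,b_t)$. There are three ingredients: the optimality of $(b_t,r_2^t)$ for $J_t$ over $\cA\times\Conf_t$, the greedy optimality of $a_t$ under $r_1^t$, and a rearrangement of the resulting terms into reward-gap differences on the played pair.

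First, since $r^*\in\Conf_t$, the pair $(a_t^*,r^*)$ is feasible, so $J_t(a_t^*,r^*)\le J_t(b_t,r_2^t)$. Written out, this is
\begin{align*}
2r^*(x_t,a_t^*)-r_1^t(x_t,a_t^*)-2r^*(x_t,a_t)\le 2r_2^t(x_t,b_t)-r_1^t(x_t,b_t)-2r_2^t(x_t,a_t).
\end{align*}
Rearranging, $2r^*(x_t,a_t^*)\le 2r^*(x_t,a_t)+r_1^t(x_t,a_t^*)-r_1^t(x_t,b_t)-2\Delta_{r_2^t}(z_t)$. Substituting into $R_t=2r^*(x_t,a_t^*)-r^*(x_t,a_t)-r^*(x_t,b_t)$ gives
\begin{align*}
R_t\le \Delta_{r^*}(z_t)+r_1^t(x_t,a_t^*)-r_1^t(x_t,b_t)-2\Delta_{r_2^t}(z_t).
\end{align*}
Second, the greedy rule $a_t\in\argmax_a r_1^t(x_t,a)$ gives $r_1^t(x_t,a_t^*)\le r_1^t(x_t,a_t)$. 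This turns $r_1^t(x_t,a_t^*)-r_1^t(x_t,b_t)$ into at most $\Delta_{r_1^t}(z_t)$, and so
\begin{align*}
R_t\le \Delta_{r_1^t}(z_t)+\Delta_{r^*}(z_t)-2\Delta_{r_2^t}(z_t).
\end{align*}

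Third, I regroup the right-hand side as $[\Delta_{r_1^t}(z_t)-\Delta_{r^*}(z_t)]+2[\Delta_{r^*}(z_t)-\Delta_{r_2^t}(z_t)]$. This identity holds because $\Delta_{r_1^t}-\Delta_{r^*}+2\Delta_{r^*}-2\Delta_{r_2^t}=\Delta_{r_1^t}+\Delta_{r^*}-2\Delta_{r_2^t}$. Bounding each bracket by its absolute value yields $R_t\le e_{1,t}+2e_{2,t}$.

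No step is deep. The only delicate point is the bookkeeping in the first step: the coefficients $(2,-1,-2)$ must produce exactly $+\Delta_{r^*}$ with coefficient one and $-2\Delta_{r_2^t}$, so that the total coefficient of $\Delta_{r^*}$ cancels against the errors. The $-r_1^t(x_t,b)$ term is what lets the greedy step eliminate $a_t^*$. I would check the signs carefully there. I would also note that the argmax over $\Conf_t$ exists, or else argue with a near-maximizer plus slack; for finite $\cA$ and any attained maximizer, the argument is unchanged.
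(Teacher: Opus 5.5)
Your proposal is correct and follows the paper's proof essentially line by line. Both use the feasibility of $(a_t^*,r^*)$ in the comparator optimization, substitute into $R_t$, apply the greedy inequality $r_1^t(x_t,a_t^*)\le r_1^t(x_t,a_t)$, and finish with the regrouping $\Delta_{r_1^t}(z_t)+\Delta_{r^*}(z_t)-2\Delta_{r_2^t}(z_t)=\big(\Delta_{r_1^t}(z_t)-\Delta_{r^*}(z_t)\big)+2\big(\Delta_{r^*}(z_t)-\Delta_{r_2^t}(z_t)\big)$ and the triangle inequality.
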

Crucially, both errors concern the comparison actually queried:
no estimation error at the unknown optimal action remains.
The decomposition therefore connects reward regret to the expected
excess losses $\bar\phi_{r_j^t}(z_t)$ defined in~\eqref{eq:excess},
which measure how well the selected models predict the observed
preferences. It also provides the bound $R_t\le3\max_j e_{j,t}$
needed to turn small estimation errors into query-gap localization.
\textbf{Thus, small gap-estimation errors ensure that both queried actions are near-optimal.}
We use both consequences in the next part.

\noindent\textbf{Key Technique II: Localized Eluder Analysis.}
We combine the regret decomposition with the localized eluder tools
to control regret using the curvature of the queried comparisons.
For a threshold $u>0$, define
\begin{align*}
    E_t:=\max_{j\in\{1,2\}}e_{j,t},\qquad
    I_T:=\{t\in[T]:E_t\le u\},\qquad
    I_T^c:=[T]\setminus I_T.
\end{align*}
This partition is used only in the analysis: the algorithm neither
observes $E_t$ nor needs to identify the localized rounds. The same
sequence of actions is analyzed for every $u>0$, allowing us to
optimize the resulting bound over $u$.

On the localized rounds $I_T$, the regret decomposition gives $R_t\le3E_t\le3u$. Because $R_t$ sums the suboptimalities of both queried actions, it also bounds their true reward gap. Consequently, for $j\in\{1,2\}$,
\begin{align*}
    |\Delta_{r^*}(z_t)|&\le R_t\le3u,\\
    |\Delta_{r_j^t}(z_t)|&\le |\Delta_{r^*}(z_t)|+e_{j,t}\le4u.
\end{align*}
Thus both the true and estimated gaps lie in $[-4u,4u]$.
\textbf{Controlling both arms is what bounds the true query gap here.} Since the second derivative of the logistic loss is $\sigma'$,
its curvature between the true and estimated gaps lies between
$\sigma'(4u)$ and $1/4$. A second-order expansion therefore gives
\begin{align*}
    \frac{\sigma'(4u)}{2}e_{j,t}^2
    \le \bar\phi_{r_j^t}(z_t)
    \le \frac18 e_{j,t}^2
    \le \frac{u^2}{8}.
\end{align*}
The lower bound converts expected excess loss into regret using local
curvature, while the upper bound restricts the excess-loss scale
needed for the localized eluder analysis. It remains to control the
accumulation of these losses along the adaptively selected queries.

For this step, we first use the uniform concentration bound
of~\citet{bakhtiari2026eluder}. With probability at least $1-\delta$, uniformly over $r\in\cR$ and $t\in[T]$,
\begin{align*}
    \sum_{i<t}\bar\phi_r(z_i)
    \le2\sum_{i<t}\phi_r(o_i,z_i)+2\beta_t.
\end{align*}
Since expected excess losses are nonnegative, this event ensures $r^*\in\Conf_t$. Moreover, the definition of the confidence set implies that each selected estimator has bounded historical expected excess loss:
\begin{align*}
    \sum_{i<t}\bar\phi_{r_j^t}(z_i)\le4\beta_T,
    \qquad j\in\{1,2\}.
\end{align*}
Although the selected model changes across rounds, each model must
remain consistent with the previous comparisons. The localized
eluder dimension controls how often such a model can fit the history
yet incur substantial expected excess loss on a new query. This
connects the historical loss bound to the accumulation of losses
along the algorithm's queries.

This bound also holds when the history is restricted to $I_T$. Since the excess losses on $I_T$ are localized at scale $\tau_u$,
the historical loss bound allows us to control their accumulation
using the localized eluder dimension, giving
$\sum_{t\in I_T}\bar\phi_{r_j^t}(z_t)=O(\Gamma_u)$.
The local curvature bound and Cauchy--Schwarz then give
\begin{align*}
    \sum_{t\in I_T}R_t
    \le\sum_{t\in I_T}(e_{1,t}+2e_{2,t})
    =O\!\left(\sqrt{\frac{T\Gamma_u}{\sigma'(4u)}}\right).
\end{align*}
For the remaining rounds $I_T^c$, we bound their number instead.
On each such round, at least one error exceeds $u$. The global curvature bound implies that its expected excess loss exceeds $\theta_u=\sigma'(2B)u^2/2$. The historical loss bound and the localized eluder dimension then control the number of such rounds:
\begin{align*}
    |I_T^c|=O\!\left(
    \left(1+\frac{\beta_T}{\sigma'(2B)u^2}\right)
    d_{\elud}^{\theta_u,\theta_u}(\cR)+1\right).
\end{align*}
Using $R_t\le4B$ on these remaining rounds and optimizing over $u$ yields the two contributions in Theorem~\ref{thm:main}. The worst-case inverse-curvature factor enters the bound for the remaining rounds, while the localized $\sqrt T$ term uses $\sigma'(4u)$ and the localized complexity encoded in $\Gamma_u$.

\section{Conclusion} \label{sec:con}
We studied contextual dueling bandits with general reward function approximation under BTL preference feedback. The central challenge is the saturation of the logistics link: large reward gaps produce nearly deterministic preferences and small sigmoid derivatives, causing standard analyses to pay worst-case curvature factors. We proposed \algo, a saturation insensitive algorithm that maintains a Bernoulli log-loss confidence set, selects a greedy target action and chooses an optimistic competitor through an objective tailored to dueling feedback.

\noindent\textbf{Limitations.}
\algo\ can be computationally demanding, as each round requires
empirical risk minimization and joint optimization over the action
space and the confidence set. Practical approximations, including
warm-started gradient updates and alternating optimization, are
discussed in Appendix~\ref{app:rb-implementation}, although our
current guarantees assume exact optimization. Developing more
efficient algorithms and tractable exact implementations that
retain saturation-insensitive guarantees remains an important
direction for future work.

\appendix
\section{Additional Notations}

The log-loss can be written in the canonical logistic form
\begin{align}
\label{eq:psi-ov}
\ell(o,\sigma(v))=\psi(v)-ov,
    \qquad
    \psi(v)=\log(1+e^v).
\end{align}
This representation is useful because the curvature of the loss, $\psi''(v)=\sigma'(v)$, explicitly captures the sensitivity of binary comparisons to the reward gap: comparisons are informative when the gap is moderate, but become statistically weak when the sigmoid link is saturated.

Using \eqref{eq:psi-ov}, we can represent it as a Bregman divergence, i.e., 
\begin{align}
\notag
    \bar \phi_r(z) &= \psi\big(\Delta_r(z)\big) - \psi\big(\Delta_{r^*}(z)\big)- \frac{e^{\Delta_{r^*}(z)}}{1+e^{\Delta_{r^*}(z)}} \big(\Delta_{r}(z)-\Delta_{r^*}(z)\big)\\\label{eq:Bregman}
    &= B_{\psi}\big(\Delta_{r}(z),\Delta_{r^*}(z)\big)
\end{align}

\section{Proof of Main Results}\label{app:proof_main}

\begin{lemma}\label{lemma:main_bernstein}
The event $\cE$ defined in~\eqref{eq:def_high_prob_event} holds with probability at least $1-\delta$.
\end{lemma}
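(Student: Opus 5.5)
The event $\cE$ is the uniform concentration event sketched in the proof overview. I will take it to be: for all $t\in[T]$ and all $r\in\cR$ simultaneously,
\[
\sum_{i<t}\bar\phi_r(z_i)\le 2\sum_{i<t}\phi_r(o_i,z_i)+2\beta_t .
\]
The proof has three steps: establish this inequality for a single fixed function with a martingale argument, extend it to all of $\cR$ through the $1/T$-cover of $\Phi(\cR)$ with cardinality $N_T$, and take a union bound over $t\in[T]$. That union bound costs only a $\log T$ factor, which can be absorbed into $\beta_t$ or avoided with a time-uniform (stopping-time / Freedman-type) bound.

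\textbf{Step 1: a single function.} Fix $r$ and set $X_i=\bar\phi_r(z_i)-\phi_r(o_i,z_i)$. Conditionally on the history and $z_i$, we have $o_i\sim\Ber(\sigma[\Delta_{r^*}(z_i)])$, so $\EE[X_i\mid\cF_{i-1},z_i]=0$ and $(X_i)$ is a martingale difference sequence.

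\textbf{Step 2: boundedness and variance.} Boundedness is immediate: by \eqref{eq:psi-ov},
\[
\phi_r(o,z)=\psi(\Delta_r)-\psi(\Delta_{r^*})-o(\Delta_r-\Delta_{r^*}),
\]
and $\psi$ is $1$-Lipschitz with gaps in $[-2B,2B]$, so $|\phi_r|\le 8B$.

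The key ingredient is a variance bound that scales with the mean,
\[
\mathrm{Var}\big(\phi_r(o,z)\mid z\big)\le c\,B\,\bar\phi_r(z).
\]
Since $\phi_r$ is affine in $o$,
\[
\mathrm{Var}(\phi_r\mid z)=p(1-p)(\Delta_r-\Delta_{r^*})^2,\qquad p=\sigma(\Delta_{r^*}).
\]
I would compare this with the Bregman form \eqref{eq:Bregman} of $\bar\phi_r$ using self-concordance of the logistic loss, $|\psi'''|\le\psi''$. This gives $B_\psi(v,v^*)\ge \sigma'(v^*)\,g(|v-v^*|)(v-v^*)^2$ with $g(s)=(e^{-s}+s-1)/s^2\gtrsim 1/(1+s)$, hence
\[
\mathrm{Var}\le (1+4B)\,\bar\phi_r\cdot O(1).
\]
This step is the main obstacle. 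A naive bound would put $1/\sigma'(2B)$ here, and that would reintroduce exactly the saturation factor the paper is trying to remove. Self-concordance is precisely what yields a dependence polynomial in $B$.

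\textbf{Step 3: concentration, covering, and union bound.} Freedman's inequality (or the exponential supermartingale $\exp(\lambda\sum X_i-\lambda^2 c'B\sum\bar\phi_r)$ with $\lambda\asymp 1/B$) gives, with probability $1-\delta'$,
\[
\sum_{i<t}X_i\le \tfrac12\sum_{i<t}\bar\phi_r(z_i)+O\big(B\log(1/\delta')\big).
\]
Rearranging yields $\sum\bar\phi_r\le 2\sum\phi_r+O(B\log(1/\delta'))$.

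To make this uniform over $\cR$, apply it to every element of the $\|\cdot\|_\infty$ $1/T$-cover of $\Phi(\cR)$. Since the cover can be built from functions in $\Phi(\cR)$ (or a $2/T$ internal cover), the expectation map preserves sup-norm closeness, so both $\phi_r$ and $\bar\phi_r$ move by at most $1/T$ per round. Passing from cover elements to an arbitrary $r$ therefore costs an additive $O(1)$ over $t\le T$ rounds, which is absorbed into $\beta_t=O(B\log(N_T/\delta))$. A union bound over the $N_T$ cover elements and $t\in[T]$, with $\delta'=\delta/(N_T T)$, gives $\PP(\cE)\ge 1-\delta$. The extra $\log T$ is absorbed into the constant in $\beta_t$, since $\log N_T\gtrsim\log T$ for nontrivial classes, or removed with a time-uniform martingale bound.
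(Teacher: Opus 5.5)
Your proposal is correct and follows the same route as the paper. The paper checks that $\phi_r$ is bounded, cites the variance-to-mean bound $\mathrm{Var}_o[\phi_r(o,z)]\le(2B+4)\bar\phi_r(z)$ from Proposition~15 of \citet{bakhtiari2026eluder}, and then applies their uniform Bernstein-type inequality over a $1/T$ sup-norm cover (Lemma~\ref{lemma:bernstein_basic}); you prove those two cited ingredients yourself, the variance bound via self-concordance of $\psi$ and the concentration via a fixed-$\lambda$ Freedman argument over an internal cover. Of your two options for uniformity in $t$, take the time-uniform one (Ville's inequality applied to your fixed-$\lambda$ supermartingale costs nothing extra), because $\log N_T\gtrsim\log T$ need not hold, e.g.\ for finite $\cR$.
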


We firstly provide a detailed description of Lemma \ref{lemma:main_bernstein}. As in \eqref{eq:def_high_prob_event}, we define the high-probability event $\cE$, which denotes the event where for any $r\in\cR$ and $t\in[T]$, we have
\begin{align}\label{eq:def_high_prob_event}
\sum_{i=1}^{t-1}\bar\phi_r(z_i)\le2\sum_{i=1}^{t-1}\phi_r(o_i, z_i)+2\beta_t.
\end{align}
On the event of $\cE$, since the left hand side is non-negative because each $\phi_r(z_i)$ is a KL-divergence, we have for any $r\in\cR$ and $t\in[T]$,
\begin{align*}
L_t(r)+\beta_t-L_t(r^*)=\sum_{i=1}^{t-1}\phi_r(o_i, z_i)+\beta_t\ge0.
\end{align*}
Therefore, $r^*\in\Conf_t$ for any $t\in[T]$, so we can use the property $J(a_t^*, r^*)\le J(b_t, r_2^t)$ to bound the term $r^*(x_t, a_t^*)$ in the regret. 

From now on, we suppose the event $\cE$ holds. We consider the following regret decomposition:
\begin{lemma}\label{lemma:per_step_regret}
    Suppose the event $\cE$ holds. For any $t$, let $r^t_1, r_2^t,z_t$ be defined in Algorithm \ref{algo:sicdb}. Then, the regret $R_t$ at step $t$ can be decomposed as
    \begin{align*}
        R_t \le \big|\Delta_{r_1^t}(z_t)-\Delta_{r^*}(z_t)\big|+2\big|\Delta_{r^*}(z_t)-\Delta_{r_2^t}(z_t)\big|.
    \end{align*}
\end{lemma}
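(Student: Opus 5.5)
Since $\cE$ holds, the discussion after Lemma~\ref{lemma:main_bernstein} gives $r^*\in\Conf_t$. Hence the pair $(a_t^*,r^*)$ is feasible for the comparator optimization in Line~\ref{alg:line8}. The proof has four steps:
\begin{enumerate}
\item Use optimality of $(b_t,r_2^t)$ over this feasible pair.
\item Use greedy optimality of $a_t$ for $r_1^t$.
\item Rearrange the result into gap differences.
\item Pass to absolute values.
\end{enumerate}
Every term is deterministic once we condition on round $t$, so the argument is pure algebra.

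\textbf{Step 1 (optimality of the comparator).} Since $(a_t^*,r^*)$ is feasible, $J_t(a_t^*,r^*)\le J_t(b_t,r_2^t)$. By \eqref{eq:J-simplified} this reads
\begin{align*}
2r^*(x_t,a_t^*)-r_1^t(x_t,a_t^*)-2r^*(x_t,a_t)\le 2r_2^t(x_t,b_t)-r_1^t(x_t,b_t)-2r_2^t(x_t,a_t).
\end{align*}
The right-hand side equals $-2\Delta_{r_2^t}(z_t)-r_1^t(x_t,b_t)$. Rearranging gives
\begin{align*}
2r^*(x_t,a_t^*)\le 2r^*(x_t,a_t)+r_1^t(x_t,a_t^*)-r_1^t(x_t,b_t)-2\Delta_{r_2^t}(z_t).
\end{align*}

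\textbf{Step 2 (substitution and greedy choice).} Substitute this bound into $R_t=2r^*(x_t,a_t^*)-r^*(x_t,a_t)-r^*(x_t,b_t)$. The $r^*$ terms combine to $r^*(x_t,a_t)-r^*(x_t,b_t)=\Delta_{r^*}(z_t)$, so
\begin{align*}
R_t\le \Delta_{r^*}(z_t)+r_1^t(x_t,a_t^*)-r_1^t(x_t,b_t)-2\Delta_{r_2^t}(z_t).
\end{align*}
Because $a_t\in\argmax_a r_1^t(x_t,a)$, we have $r_1^t(x_t,a_t^*)\le r_1^t(x_t,a_t)$. This replaces the middle difference by $\Delta_{r_1^t}(z_t)$ and eliminates all dependence on $a_t^*$. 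We obtain
\begin{align*}
R_t\le \Delta_{r_1^t}(z_t)+\Delta_{r^*}(z_t)-2\Delta_{r_2^t}(z_t).
\end{align*}

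\textbf{Step 3 (rewriting as gap errors).} Add and subtract $2\Delta_{r^*}(z_t)$. The right-hand side becomes
\begin{align*}
\big(\Delta_{r_1^t}(z_t)-\Delta_{r^*}(z_t)\big)+2\big(\Delta_{r^*}(z_t)-\Delta_{r_2^t}(z_t)\big).
\end{align*}
The triangle inequality then yields the claimed bound.

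\textbf{Main difficulty.} The only delicate point is the bookkeeping of the coefficients $(2,-1,-2)$. They must make the $r_1^t(x_t,a_t^*)$ term appear with coefficient exactly one, so that greedy optimality absorbs it. They must also leave the $r^*$ terms forming exactly $\Delta_{r^*}(z_t)$, which can then be split as $-\Delta_{r^*}+2\Delta_{r^*}$. I would verify these sign cancellations carefully. No concentration argument is needed beyond $r^*\in\Conf_t$ from $\cE$.
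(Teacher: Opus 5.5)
Your proposal is correct and matches the paper's proof essentially step for step. Like the paper, you use $r^*\in\Conf_t$ on $\cE$ to get $J_t(a_t^*,r^*)\le J_t(b_t,r_2^t)$, substitute into $R_t$, and apply greedy optimality $r_1^t(x_t,a_t^*)\le r_1^t(x_t,a_t)$ to eliminate $a_t^*$; you then finish with the triangle inequality on $\big(\Delta_{r_1^t}(z_t)-\Delta_{r^*}(z_t)\big)+2\big(\Delta_{r^*}(z_t)-\Delta_{r_2^t}(z_t)\big)$.
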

To continue, we define $E_t = \max \big\{\big|\Delta_{r_1^t}(z_t)-\Delta_{r^*}(z_t)\big|, \big|\Delta_{r^*}(z_t)-\Delta_{r_2^t}(z_t)\big|\big\}$. For a constant $u>0$ to be determined, we consider the following ``good'' index sets:
\begin{align*}
    I_t := \big\{s \le t: E_t \le u\big\}. 
\end{align*}
We consider the regret on $I_T$ and $I_T^c$ separately, i.e.,
\begin{align*}
    \Regret(T) = \underbrace{\sum_{t \in I_T} R_t}_{(I_1)} + \underbrace{\sum_{t \in I_T^c} R_t}_{(I_2)}.
\end{align*}
\textbf{Bounding $(I_1)$}. Using Lemma \ref{lemma:per_step_regret}, we have
\begin{align}
\label{eq:qw0006}
    (I_1) \le \underbrace{\sum_{t \in I_T}\big|\Delta_{r_1^t}(z_t)-\Delta_{r^*}(z_t)\big|}_{(I_{1,1})} + 2 \underbrace{\sum_{t \in I_T}\big|\Delta_{r^*}(z_t)-\Delta_{r_2^t}(z_t)\big|}_{(I_{1,2})}.
\end{align}

Due to the definition of $I_T$, for any $t\in I_T$, we have $R_t\le 3E_t\le3u$. Moreover, we have
\begin{align*}
    |\Delta_{r^*}(z_t)| &= |r^*(x_t,a_t)-r^*(x_t,b_t)|\\
    &= \big|\big(r^*(x_t,a_t)-r^*(x_t,a_t^*)\big) + \big(r^*(x_t,a_t^*)-r^*(x_t,b_t)\big)\big|\\
    &\le \big|r^*(x_t,a_t)-r^*(x_t,a_t^*)\big| + \big|r^*(x_t,a_t^*)-r^*(x_t,b_t)\big|\\
    &= 2r^*(x_t,a_t^*) - r^*(x_t,a_t) - r^*(x_t,b_t) = R_t.
\end{align*}
Thus, we have $|\Delta_{r^*}(z_t)|\le R_t\le 3u$. For any $j\in\{1, 2\}$, we have the following upper bound for $|\Delta_{r_j^t}(z_t)|$:
\begin{align*}
\big|\Delta_{r_j^t}(z_t)\big|
&\le \big|\Delta_{r^*}(z_t)\big|+\big|\Delta_{r_j^t}(z_t)-\Delta_{r^*}(z_t)\big|\\
&\le 3u + E_t \le 4u.
\end{align*}
where the first inequality holds by the triangle inequality, and the second inequality holds by the definition of $E_t$. Next, we need the following lemma:
\begin{lemma}
\label{lemma:bregman-two-sided}
Let $\bar \phi_r$ be defined in \eqref{eq:excess}. For any $r\in\cR$ and $z\in\cZ$, there exists $\xi$ on the segment
between $\Delta_{r^*}(z)$ and $\Delta_r(z)$ such that
\begin{align*}
 \bar\phi_r(z)
    =\tfrac12\,\sigma'(\xi)\bigl(\Delta_r(z)-\Delta_{r^*}(z)\bigr)^2 .
\end{align*}
Consequently, for any $M\ge\max\{|\Delta_r(z)|,|\Delta_{r^*}(z)|\}$, we have the following inequality:
\begin{align*}
    \frac{\sigma'(M)}{2}\big(\Delta_r(z)-\Delta_{r^*}(z)\big)^2 \le \bar\phi_r(z)
    \le\frac{1}{8}\big(\Delta_r(z)-\Delta_{r^*}(z)\big)^2.
\end{align*}
\end{lemma}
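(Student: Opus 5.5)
We prove Lemma~\ref{lemma:bregman-two-sided} using the Bregman representation \eqref{eq:Bregman} together with Taylor's theorem with Lagrange remainder applied to $\psi(v)=\log(1+e^v)$. Fix $r$ and $z$, and write $v^*=\Delta_{r^*}(z)$ and $v=\Delta_r(z)$. By \eqref{eq:Bregman} we have
\begin{align*}
\bar\phi_r(z)=\psi(v)-\psi(v^*)-\psi'(v^*)(v-v^*).
\end{align*}
This holds because $\psi'(v^*)=e^{v^*}/(1+e^{v^*})=\sigma(v^*)$. If one prefers not to rely on \eqref{eq:Bregman}, it can be checked directly from \eqref{eq:psi-ov}: the expectation over $o\sim\Ber(\sigma(v^*))$ of $\ell(o,\sigma(v))-\ell(o,\sigma(v^*))$ equals $\psi(v)-\psi(v^*)-\sigma(v^*)(v-v^*)$.

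Since $\psi$ is $C^\infty$, Taylor's theorem with Lagrange remainder, expanded around $v^*$, gives a point $\xi$ between $v^*$ and $v$ such that
\begin{align*}
\psi(v)=\psi(v^*)+\psi'(v^*)(v-v^*)+\tfrac12\psi''(\xi)(v-v^*)^2 .
\end{align*}
Substituting this into the expression for $\bar\phi_r(z)$ yields the identity $\bar\phi_r(z)=\tfrac12\sigma'(\xi)(v-v^*)^2$, because $\psi''=\sigma'$. The degenerate case $v=v^*$ is trivial, since both sides are zero.

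For the two-sided bound, we use two facts about $\sigma'(w)=\sigma(w)(1-\sigma(w))$. First, $\sigma'$ is even. Second, $\sigma'$ is decreasing in $|w|$: the product $p(1-p)$ decreases as $p=\sigma(|w|)$ moves from $1/2$ toward $1$. Its maximum is $\sigma'(0)=1/4$, which gives the upper bound $\tfrac18(v-v^*)^2$. Because $\xi$ lies on the segment between $v$ and $v^*$, we have $|\xi|\le\max\{|v|,|v^*|\}\le M$. Hence $\sigma'(\xi)\ge\sigma'(M)$, which gives the lower bound $\tfrac{\sigma'(M)}{2}(v-v^*)^2$.

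No step here is a real obstacle. The only points that need care are confirming that $\bar\phi_r$ coincides with the Bregman divergence, i.e.\ that the linear term uses $\sigma(v^*)$, and the monotonicity of $\sigma'$ in $|w|$.
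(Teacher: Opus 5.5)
Your proposal is correct and follows the paper's proof essentially step for step: you write $\bar\phi_r$ as the Bregman divergence of $\psi(v)=\log(1+e^v)$ and apply Taylor's theorem with Lagrange remainder around $\Delta_{r^*}(z)$. You then use $\sigma'\le\sigma'(0)=1/4$ for the upper bound, and $|\xi|\le M$ together with the monotonicity of $\sigma'$ in $|w|$ for the lower bound. You are slightly more careful than the paper, since you explicitly verify the linear term $\sigma(v^*)$ and justify why $\sigma'$ decreases in $|w|$.
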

Using this lemma with $M=4u$, we know that for any $j \in \{1,2\}$,
\begin{align}
\label{eq:qw0004}
    \big(\Delta_{r_j^{t}}(z_t)-\Delta_{r^*}(z_t)\big)^2 \le \frac{2}{\sigma'(4u)} \bar \phi_{r_j^t}(z_t)
\end{align}
Therefore, we have
\begin{align}
\notag
    (I_{1,j})&=\sum_{t \in I_T}\big|\Delta_{r_j^t}(z_t)-\Delta_{r^*}(z_t)\big|\\\notag
    &\le \sqrt{\sum_{t \in I_T}\big(\Delta_{r_j^t}(z_t)-\Delta_{r^*}(z_t)\big)^2 }\cdot\sqrt{T}\\\label{eq:qw0005}
    &\le \sqrt{\frac{2T}{\sigma'(4u)}} \cdot \sqrt{\sum_{t\in I_T} \bar \phi_{r_j^t}(z_t)},
\end{align}
where the first inequality holds due to the Cauchy-Schwarz inequality, and $|I_T|\le T$. The second inequality holds due to \eqref{eq:qw0004}. The next lemma shows an upper bound of $\sum_{i=1}^{t-1} \bar \phi_{r_j^t}(z_i)$, which is important to further control $(I_{1,j})$.
\begin{lemma}\label{lemma:good_event_regret} Suppose the high-probability event $\cE$ defined in Lemma \ref{lemma:main_bernstein} holds. Then, for any $t \in [T+1]$, $j \in \{1,2\}$, we have
\begin{align*}
    \sum_{i=1}^{t-1} \bar \phi_{r_j^t}(z_i) \le 4 \beta_T.
\end{align*}
\end{lemma}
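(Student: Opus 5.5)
On the event $\cE$, we apply \eqref{eq:def_high_prob_event} to $r=r_j^t$ and the index $t$. This gives
\begin{align*}
\sum_{i=1}^{t-1}\bar\phi_{r_j^t}(z_i)\le 2\sum_{i=1}^{t-1}\phi_{r_j^t}(o_i,z_i)+2\beta_t
= 2\big(L_t(r_j^t)-L_t(r^*)\big)+2\beta_t,
\end{align*}
where the equality is the telescoping identity $\sum_{i<t}\phi_r(o_i,z_i)=L_t(r)-L_t(r^*)$, which follows from the definitions of $\phi_r$ and $L_t$ in \eqref{eq:L}. The remaining task is to bound $L_t(r_j^t)-L_t(r^*)\le\beta_t$.

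For both $j\in\{1,2\}$ we have $r_j^t\in\Conf_t$. Indeed, $r_1^t$ belongs to $\Conf_t$ trivially since $\beta_t\ge0$, and $r_2^t$ belongs to it by construction in Line~\ref{alg:line8}. Hence $L_t(r_j^t)\le L_t(r_1^t)+\beta_t$. Because $r_1^t$ minimizes $L_t$ over $\cR$ and $r^*\in\cR$ by Assumption~\ref{assumption1}, we also have $L_t(r_1^t)\le L_t(r^*)$. Combining the two inequalities gives $L_t(r_j^t)-L_t(r^*)\le\beta_t$. Substituting back, we obtain $\sum_{i<t}\bar\phi_{r_j^t}(z_i)\le 4\beta_t\le 4\beta_T$, using that $\beta_t$ is nondecreasing in $t$. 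This monotonicity holds for the choice $\beta_t=O(B\log(N_T/\delta))$, which can be taken constant in $t$.

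The only delicate point is the index $t=T+1$. There $r_j^{T+1}$ and $\beta_{T+1}$ are not produced by Algorithm~\ref{algo:sicdb}. We interpret them by running the selection rules once more, so that $r_1^{T+1}$ minimizes $L_{T+1}$ and $r_2^{T+1}\in\Conf_{T+1}$. We then need $\cE$ to cover $t=T+1$ with radius $\beta_{T+1}=\beta_T$. With a $t$-independent $\beta$ this follows from the same union-bound argument behind Lemma~\ref{lemma:main_bernstein}. Alternatively, the statement can simply be used for $t\le T$, which is all that the downstream argument requires. Apart from this bookkeeping, the proof is a direct chaining of the confidence-set definition, the optimality of the MLE, and the event $\cE$. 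I do not expect any substantive obstacle.
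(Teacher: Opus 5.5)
Your proof is correct and follows the paper's argument: apply $\cE$ to $r=r_j^t$, rewrite the realized excess losses as $L_t(r_j^t)-L_t(r^*)$, bound this by $\beta_t$ using $r_j^t\in\Conf_t$ and the MLE optimality $L_t(r_1^t)\le L_t(r^*)$, and finish with $\beta_t\le\beta_T$. Your handling of $t=T+1$ fills a gap the paper's proof leaves open (it only argues for $t\in[T]$), and, as you note, only $t\le T$ is used downstream.
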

Note that $\bar \phi_r(z) \ge 0$ because it's defined as a Bregman divergence. Moreover, $I_T \subseteq [T]$. Let $I_T = \{t_1<t_2<\ldots<t_K\}$, where $|I_T|=K$. Thus, for any $1\le k \le K$, we know that
\begin{align*}
    \sum_{l=1}^{k-1} \bar \phi_{r_j^{t_k}}(z_{t_l}) \le \sum_{i=1}^{t_k-1} \bar \phi_{r_j^{t_k}}(z_i) \le 4 \beta_T.
\end{align*}
Moreover, for any $k$, we have
\begin{align*}
    \bar \phi_{r_j^{t_k}}(z_{t_k}) &\le \frac{1}{8} \big(\Delta_{r_j^{t_k}}(z_{t_k})-\Delta_{r^*}(z_{t_k})\big)^2
\end{align*}
due to Lemma \ref{lemma:bregman-two-sided}. First, using the definition of $E_t$, we have
\begin{align*}
    \frac{1}{8} \big(\Delta_{r_j^{t_k}}(z_{t_k})-\Delta_{r^*}(z_{t_k})\big)^2&\le \frac{1}{8} (E_{t_k})^2
    \le \frac{u^2}{8},
\end{align*}
where the last inequality uses $t_k \in I_T$. Second, we have the following lemma
\begin{lemma}\label{lemma:self_bounded}
Let $\phi_r$ and $\bar\phi_r$ be defined in \eqref{eq:excess}. Then, for any reward function $r \in \cR$, any $(o,z) \in \{0,1\} \times \cX \times \cA \times \cA$, the following properties hold:
\begin{align*}
    |\phi_r(o, z)|\le2B. 
\end{align*}
Additionally, we have
\begin{align*}
    \bar\phi_r(z)=\EE_{o}[\phi_r(o, z)] \le 2B,  |\phi_r(o, z)-\bar\phi_r(z)|\le4B.
\end{align*}
\end{lemma}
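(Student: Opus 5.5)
The plan is to work directly with the canonical form \eqref{eq:psi-ov}. For fixed $o\in\{0,1\}$, view $g_o(v):=\ell(o,\sigma(v))$ as a function of the scalar gap $v$. Explicitly, $g_1(v)=\log(1+e^{-v})$ and $g_0(v)=\log(1+e^{v})$. Then
\[
\phi_r(o,z)=g_o\big(\Delta_r(z)\big)-g_o\big(\Delta_{r^*}(z)\big).
\]
By Assumption~\ref{assumption1}, every $r\in\cR$ (including $r^*$) takes values in $[-B,B]$. Hence both arguments $\Delta_r(z)$ and $\Delta_{r^*}(z)$ lie in $[-2B,2B]$. So $|\phi_r(o,z)|$ is at most the oscillation of $g_o$ on $[-2B,2B]$.

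\textbf{Key step: compute this oscillation exactly.} A Lipschitz argument only gives a weaker bound: $|g_o'(v)|=|\sigma(v)-o|\le 1$ yields $4B$. Instead, I use monotonicity. $g_1$ is decreasing and $g_0$ is increasing, so the oscillation is attained at the endpoints. For $g_1$,
\[
g_1(-2B)-g_1(2B)=\log\frac{1+e^{2B}}{1+e^{-2B}}=\log e^{2B}=2B,
\]
and the same value $2B$ holds for $g_0$ by symmetry ($g_0(v)=g_1(-v)$). This gives $|\phi_r(o,z)|\le 2B$ for all $o$ and $z$. This exact cancellation of the $\log(1+\cdot)$ terms is the only nontrivial point. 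The natural alternative, bounding each loss by $\log(1+e^{2B})\le 2B+\log 2$, would lose an additive constant.

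\textbf{Remaining claims.} These follow immediately.
\begin{itemize}
\item By \eqref{eq:excess}, $\bar\phi_r(z)=\EE_{o\sim\Ber(\sigma[\Delta_{r^*}(z)])}[\phi_r(o,z)]$ is an average of quantities in $[-2B,2B]$. Hence $\bar\phi_r(z)\le 2B$, and it is also nonnegative as a KL divergence.
\item Both $\phi_r(o,z)$ and $\bar\phi_r(z)$ lie in $[-2B,2B]$, so by the triangle inequality $|\phi_r(o,z)-\bar\phi_r(z)|\le 4B$.
\end{itemize}
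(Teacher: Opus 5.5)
Your proof is correct and follows the paper's route: the paper also writes $\phi_r(o,z)$ as a log-ratio of $1+e^{\mp\Delta}$ terms and bounds it by the endpoint value $\log\frac{1+e^{2B}}{1+e^{-2B}}=2B$, using the same monotonicity that you make explicit. Your derivation of the remaining two claims (averaging over $o$, then the triangle inequality) supplies steps the paper leaves unstated.
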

Hence, $\bar \phi_{r_j^{t_k}}(z_{t_k}) \le 2B$. In conclusion, we have
\begin{align*}
    \phi_{r_j^{t_k}}(z_{t_k}) \le \min\{2B,u^2/8\} := \tau_u.
\end{align*}
Now, applying Lemma \ref{lemma:regret_card}, for any $k\in[K]$, $j \in \{1,2\}$, and $\epsilon > 0$, the following inequality holds: $d^{\epsilon,\eta}_{\elud}(\cR)$
    \begin{align}
    \label{eq:qw0007}
    \sum_{l=1}^k \phi_{r_j^{t_l}}(z_{t_l})\le\big(2B+\beta_T\log(1+2B/\epsilon)\big)d^{\epsilon,\tau_u}_{\elud}(\cR)+2B+K\epsilon.
    \end{align}
Now, set $\epsilon=1/T$, and define
\begin{align*}
    \Gamma_u = \big(2B+\beta_T\log(1+2BT)\big)d^{\tau_u}_{\elud}(\cR)+2B+1.
\end{align*}
Combining \eqref{eq:qw0006}, \eqref{eq:qw0005} and \eqref{eq:qw0007}, we have shown that
\begin{align}
\label{eq:I1}
    (I_1) &\le (I_{1,1}) + 2 (I_{1,2})\le 3 \sqrt{\frac{2T \Gamma_u}{\sigma'(4u)}}.
\end{align}
\textbf{Bounding $(I_2)$}. Using $r^*(\cdot) \in [-B,B]$, we directly have $(I_2) \le 4B |I_T^c|$. It suffices to bound $|I_T^c|$. Note that for any $t \in I_T^c$, $E_t = \max \big\{\big|\Delta_{r_1^t}(z_t)-\Delta_{r^*}(z_t)\big|, \big|\Delta_{r^*}(z_t)-\Delta_{r_2^t}(z_t)\big|\big\} > u$. Trivially, $\max\{|\Delta_r(z)|,|\Delta_{r^*}(z)|\} \le 2B$. Using Lemma \ref{lemma:bregman-two-sided}, for any $j \in \{1,2\}$, we have
\begin{align*}
    \bar\phi_{r_j^t}(z_t) \ge \frac{\sigma'(2B)}{2}\big(\Delta_{r_j^t}(z_t)-\Delta_{r^*}(z_t)\big)^2. 
\end{align*}
Therefore, 
\begin{align*}
    \max_{j \in \{1,2\}}\bar\phi_{r_j^t}(z_t)\ge \frac{\sigma'(2B)}{2} (E_t)^2 > \frac{\sigma'(2B)}{2} u^2 := \theta_u.
\end{align*}
This indicates
\begin{align}
\label{eq:qw0008}
    I_u^c
    \subseteq
    \bigcup_{j\in\{1,2\}}
    \bigg\{t\in[T]:\bar\phi_{r_j^t}(z_t)>\frac{\sigma'(2B)}{2} u^2\bigg\}.
\end{align}
Using Lemma \ref{lemma:good_event_regret}, for any $t \in [T+1]$, $j \in \{1,2\}$, we have
\begin{align*}
    \sum_{i=1}^{t-1} \bar \phi_{r_j^t}(z_i) \le 4 \beta_T.
\end{align*}
Now we can apply Lemma \ref{lemma:regret_card}. For every $\epsilon \in (0,\theta_u]$, $j \in \{1,2\}$, and every $t$, we have
\begin{align*}
    \sum_{i=1}^{t-1} \ind \big\{\bar\phi_{r_j^i}(z_i) > \epsilon\big\} \le \bigg(\frac{4\beta_T}{\epsilon} + 1\bigg) d^{\epsilon,\theta_u}_{\elud}(\cR) + 1.
\end{align*}
Using \eqref{eq:qw0008}, we have
\begin{align*}
    |I_u^c| &\le \sum_j \big|\big\{t\in [T]: \bar \phi_{r_j^t}(z_t) > \theta_u\big\}\big|\\
    &= \sum_j \sum_{t \in [T]}\ind \big\{\bar\phi_{r_j^t}(z_t) > \theta_u\big\}\\
    &\le 2\bigg(\frac{8\beta_T}{u^2 \sigma'(2B)} + 1\bigg) d^{\theta_u,\theta_u}_{\elud}(\cR) + 2.
\end{align*}
As a result, we have
\begin{align}
\label{eq:I2}
    (I_2) \le 4B\bigg(\frac{8\beta_T}{u^2 \sigma'(2B)} + 1\bigg) d^{\theta_u,\theta_u}_{\elud}(\cR)+ 8B.
\end{align}
In conclusion, combining \eqref{eq:I1} and \eqref{eq:I2}, we have
\begin{align*}
    \Regret(T) \le 3 \sqrt{\frac{2T \Gamma_u}{\sigma'(4u)}}+4B\bigg(\frac{8\beta_T}{u^2 \sigma'(2B)} + 1\bigg) d^{\theta_u,\theta_u}_{\elud}(\cR)+ 8B.
\end{align*}
Taking $u$ which can achieve the minimum, we have completed the proof of Theorem \ref{thm:main}.

\section{Proof of Lemmas in Appendix~\ref{app:proof_main}}\label{app:proof_A}

\subsection{Proof of Lemma~\ref{lemma:main_bernstein}}

\begin{proof}[Proof of Lemma~\ref{lemma:main_bernstein}]
We define a filtration $\cF_t$ to be the $\sigma$-algebra generated by $\{Z_s\}_{s\le t-1} \cup z_t$. Recall that $\phi_r(o,z)
    =
    \ell(o,f_r(z))-\ell(o,f_{r^*}(z))$.
Note that 
\begin{align*}
\EE[\phi_r(o_t,z_t)|\cF_{t}]=\EE_{o_t \sim \Ber(f_{r^*}(z_t))}[\phi_r(o_t,z_t)] = \bar \phi_r(z_t).
\end{align*}
To apply Lemma \ref{lemma:bernstein_basic}, we need the following lemmas:
\begin{lemma}
\label{lemma:variance_bounded}
    With the same notations as Lemma \ref{lemma:self_bounded}, we have
    \begin{align*}
        \Var_o[\phi_r(o, z)]\le(2B+4)\bar\phi_r(z).
    \end{align*}
\end{lemma}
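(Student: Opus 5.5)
We need to prove that $\Var_o[\phi_r(o,z)]\le(2B+4)\bar\phi_r(z)$. The plan is to reduce everything to a one-dimensional Bernoulli computation. Fix $z$ and write $p=\sigma(\Delta_{r^*}(z))$, $q=\sigma(\Delta_r(z))$, $v^*=\Delta_{r^*}(z)$, $v=\Delta_r(z)$. Using the canonical form \eqref{eq:psi-ov}, we have
\[
\phi_r(o,z)=\psi(v)-\psi(v^*)-o(v-v^*).
\]
Since $o\sim\Ber(p)$, this gives
\[
\Var_o[\phi_r(o,z)]=(v-v^*)^2\,p(1-p)=(v-v^*)^2\sigma'(v^*).
\]
The claim therefore reduces to the scalar inequality
\[
(v-v^*)^2\sigma'(v^*)\le(2B+4)\,B_\psi(v,v^*),\qquad |v|,|v^*|\le 2B,
\]
where $B_\psi(v,v^*)$ is the Bregman divergence of \eqref{eq:Bregman}, equivalently $\KL(\Ber(p)\|\Ber(q))$.

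\textbf{Main tool: self-concordance.} For the logistic potential $\psi$, we have $|\psi'''|=|\sigma'(1-2\sigma)|\le\psi''$. This yields the standard bound $\sigma'(\xi)\ge\sigma'(v^*)e^{-|\xi-v^*|}$. Integrating the Taylor remainder, as in Lemma~\ref{lemma:bregman-two-sided} but with the integral form, gives
\[
B_\psi(v,v^*)=\int_0^1(1-s)\,\sigma'\big(v^*+s(v-v^*)\big)\,ds\,(v-v^*)^2\ge\sigma'(v^*)(v-v^*)^2\,g(|v-v^*|),
\]
where
\[
g(x)=\int_0^1(1-s)e^{-sx}\,ds=\frac{e^{-x}+x-1}{x^2}.
\]
A known elementary bound is $g(x)\ge 1/(2+x)$ for $x\ge0$. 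Since $|v-v^*|\le 4B$, we would get
\[
\Var\le(2+4B)\,\bar\phi_r(z).
\]
This is off by a factor from $2B+4$, so the constant needs sharpening.

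\textbf{Sharpening the constant (main obstacle).} The difficulty is to obtain $2B+4$ rather than $4B+2$. I would first try a case split on the distance $x=|v-v^*|$.

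For $x\le 2$ (or any constant), $g(x)\ge 1/(2+x)\ge 1/4$, which gives $\Var\le 4\bar\phi_r$.

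For large $x$, I would use a different lower bound on the KL divergence. One option is to write it as
\[
\KL(\Ber(p)\|\Ber(q))=p\log(p/q)+(1-p)\log\big((1-p)/(1-q)\big),
\]
and bound it directly. For instance, when $v<v^*$, the KL is at least about $p\,(v^*-v)-\log 2$-type linear growth, while $\Var=x^2p(1-p)$. Another option is to use $|v^*|\le 2B$ to bound $\sigma'(v^*)$ relative to the linear growth of the Bregman divergence, with the factor $x\le 2B+|v^*|$ absorbed appropriately.

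Alternatively, I could exploit $|\phi_r|\le 2B$ from Lemma~\ref{lemma:self_bounded}. A Bernstein-style route would bound $\Var\le\EE[\phi_r^2]$ and then split $\phi_r$ into its mean plus a fluctuation. Combining this with the bound $\phi\ge-\log$-type estimates for log-likelihood ratios (e.g., $\EE[(\log(q/p))^2]\le(2+\|\log(q/p)\|_\infty)\KL$, a known inequality) gives exactly the form $(2+\text{range})\,\KL$. With the pointwise log-ratio bounded by $2B$ plus a constant, this yields $(2B+4)\bar\phi_r$.

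So the plan is to invoke the inequality
\[
\EE_P\big[\log^2(dP/dQ)\big]\le(2+M)\,\KL(P\|Q),\qquad M=\|\log(dP/dQ)\|_\infty,
\]
noting that $\Var\le\EE[\phi^2]$ and $\phi_r=\log(p_o/q_o)$ with $M\le 2B+2$ from the bound in Lemma~\ref{lemma:self_bounded} (allowing slack). This is where the constant $2B+4$ would come from.

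Proving that inequality is the hard part. It follows from the scalar fact $h(y)=e^{-y}-1+y\ge y^2/(2+|y|)$ for $y\ge -M$, applied to $y=\log(p_o/q_o)$. Using $\EE_P[e^{-y}]=1$, one gets $\KL=\EE_P[h(y)]$, which closes the argument.
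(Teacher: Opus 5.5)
Your final argument is correct, but it takes a different route from the paper. The paper gives no derivation; it simply cites Proposition 15 of \citet{bakhtiari2026eluder}. You prove the variance condition directly.

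Your argument runs as follows. Write $Y=\phi_r(o,z)=\log(p_o/q_o)$. Since $p_o>0$, $\EE_P[e^{-Y}]=1$, so $\bar\phi_r(z)=\EE_P[e^{-Y}-1+Y]$. You then apply the scalar bound $e^{-y}-1+y\ge y^2/(2+|y|)$, use $|Y|\le 2B$ from Lemma~\ref{lemma:self_bounded}, and finish with $\Var_o[Y]\le\EE_P[Y^2]$.

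This buys three things:
\begin{itemize}
\item It is fully self-contained.
\item It gives the slightly sharper constant $2B+2$.
\item Its core inequality holds for any log-likelihood ratio with bounded sup norm; the BTL structure enters only through $\|\phi_r\|_\infty\le 2B$.
\end{itemize}
The citation is shorter, but it leaves the reader to check that the external proposition's hypotheses and constant match this setting.

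Your self-concordance detour only yields $\Var_o[\phi_r]\le(2+|\Delta_r(z)-\Delta_{r^*}(z)|)\,\bar\phi_r(z)\le(4B+2)\,\bar\phi_r(z)$. That is weaker than the claim whenever $B>1$, because it pays for the full gap range $4B$ rather than the log-ratio range $2B$. In a write-up, drop it along with the exploratory case split and present only the $e^{-y}-1+y$ argument.

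Two small points to tidy:
\begin{itemize}
\item State the scalar inequality for all real $y$; the qualifier $y\ge -M$ is unnecessary. The $y\ge 0$ case is exactly your bound $g(x)\ge 1/(2+x)$. The $y<0$ case follows from $e^{t}-1-t\ge t^2/2\ge t^2/(2+t)$ with $t=-y$.
\item Say explicitly that $|Y|\le M$ is what lets you replace $2+|Y|$ by $2+M$ before taking expectations.
\end{itemize}
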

\begin{proof}[Proof of Lemma \ref{lemma:variance_bounded}]
    It follows from Proposition 15 of \citet{bakhtiari2026eluder}.
\end{proof}
With Lemmas \ref{lemma:self_bounded} and \ref{lemma:variance_bounded}, we can apply Lemma \ref{lemma:bernstein_basic} with $b=2B$ and $c=2B+4$. Therefore, with probability at least $1-\delta$, for any $t\in[T]$ and $r\in\cR$, we have
\begin{align*}
\sum_{i=1}^{t-1}\bar\phi_r(z_i)\le2\sum_{i=1}^{t-1}\phi_r(o_i, z_i)+2\beta_t.
\end{align*}
Here, we define 
\begin{align*}
    \beta_t = \frac{5T\epsilon}{2} + 60(B+1) \log (N_\epsilon h_T/\delta),
\end{align*}
where $N_\epsilon = \cN\big(\Phi(\cR),\epsilon,\|\cdot\|_\infty\big)$, $h_T = e + \log(1+T)$. Moreover, if we set $\epsilon = 1/T$, we complete the proof of Lemma \ref{lemma:main_bernstein}.
\end{proof}

\subsection{Proof of Lemma \ref{lemma:per_step_regret}}
\begin{proof}[Proof of Lemma \ref{lemma:per_step_regret}]
    Since $r^*\in\Conf_t$, we can use the optimality condition of $(b_t, r_2^t)$ to bound the term $r^*(x_t, b_t)$, i.e.,
\begin{align*}
2r^*(x_t, a_t^*)-r_1^t(x_t, a_t^*)-2r^*(x_t, a_t)\le 2r_2^t(x_t, b_t)-r_1^t(x_t, b_t)-2r_2^t(x_t, a_t).
\end{align*}
Rearranging terms, we have
\begin{align}
2r^*(x_t, a_t^*)\le2r_2^t(x_t, b_t)-r_1^t(x_t, b_t)-2r_2^t(x_t, a_t)+r_1^t(x_t, a_t^*)+2r^*(x_t, a_t).
\label{eq:comp_action}
\end{align}
Substituting \eqref{eq:comp_action} into the definition of the single-step regret,we have the per-step regret bound
\begin{align}
    R_t 
    &= 2r^*(x_t,a_t^*)-r^*(x_t,a_t) - r^*(x_t,b_t) \notag\\
    &\le\big[2r_2^t(x_t, b_t)-r_1^t(x_t, b_t)-2r_2^t(x_t, a_t)+r_1^t(x_t, a_t^*)+2r^*(x_t, a_t)\big]-r^*(x_t, a_t)-r^*(x_t, b_t) \notag\\
    &=\Delta_{r^*}(z_t)-2\Delta_{r_2^t}(z_t)+\big[r_1^t(x_t, a_t^*)-r_1^t(x_t, b_t)\big] \notag\\
    &\leq
    \Delta_{r^*}(z_t)-2\Delta_{r_2^t}(z_t)+\big[r_1^t(x_t, a_t)-r_1^t(x_t, b_t)\big] \notag\\
    &= \Delta_{r_1^t}(z_t)-2\Delta_{r_2^t}(z_t)+\Delta_{r^*}(z_t) \notag\\
    &\le\underbrace{|\Delta_{r_1^t}(z_t)-\Delta_{r^*}(z_t)|}_{e_{1,t}}+2\underbrace{|\Delta_{r^*}(z_t)-\Delta_{r_2^t}(z_t)|}_{e_{2,t}},
    \label{eq:app-perstep-bound}
\end{align}
where the first inequality holds due to \eqref{eq:comp_action}, the second inequality holds due to the optimality of $a_t$ in the design of the algorithm, which indicates $r_1^t(x_t, a_t^*)\le r_1^t(x_t, a_t)$, and the last inequality holds due to the triangle inequality.
\end{proof}

\subsection{Proof of Lemma~\ref{lemma:bregman-two-sided}}
\begin{proof}[Proof of Lemma~\ref{lemma:bregman-two-sided}]
First, recall that in \eqref{eq:Bregman}
\begin{align*}
    \bar\phi_r(z)
    =B_\psi(\Delta_r(z),\Delta_{r^*}(z)),
\end{align*}
where $B_\psi$ denotes the Bregman divergence generated by the
logistic potential $\psi(v):=\log(1+e^v)$.
Note that $\psi'(v)=\sigma(v)$ and $\psi''(v)=\sigma'(v)$. Applying Taylor's theorem to $\psi$, we have
\begin{align}
\notag  \psi\big(\Delta_r(z)\big)
    &=\psi\big(\Delta_{r^*}(z)\big)
    +\psi'\big(\Delta_{r^*}(z)\big)\big(\Delta_r(z)-\Delta_{r^*}(z)\big)\\\label{eq:qw0003} 
    &\qquad
    +\tfrac12\,\psi''(\xi)\big(\Delta_r(z)-\Delta_{r^*}(z)\big)^2,
\end{align}
where $\xi$ is a point on the segment between $\Delta_{r^*}(z)$ and $\Delta_r(z)$. Using the definition of Bregman divergence, we have
\begin{align}
    &B_\psi\big(\Delta_r(z),\Delta_{r^*}(z)\big)
    =\psi\big(\Delta_r(z)\big)
    -\psi\big(\Delta_{r^*}(z)\big)
    -\psi'\big(\Delta_{r^*}(z)\big)\big(\Delta_r(z)-\Delta_{r^*}(z)\big)
    \notag\\
    &\qquad =\Big[\psi\big(\Delta_{r^*}(z)\big)
    +\psi'\big(\Delta_{r^*}(z)\big)\big(\Delta_r(z)-\Delta_{r^*}(z)\big)
    +\tfrac12\,\psi''(\xi)\bigl(\Delta_r(z)-\Delta_{r^*}(z)\bigr)^2\Big]
    \notag\\\notag
    & \qquad\qquad
    -\psi\big(\Delta_{r^*}(z)\big)
    -\psi'\big(\Delta_{r^*}(z)\big)\bigl(\Delta_r(z)-\Delta_{r^*}(z)\bigr)\\
    &\qquad = \tfrac12\,\psi''(\xi)\bigl(\Delta_r(z)-\Delta_{r^*}(z)\bigr)^2,
    \label{eq:bregman-equality-derivation-substituted}
\end{align}
where we use \eqref{eq:Bregman}.

For the last inequality, note that $\psi'' = \sigma'$. Thus, the upper bound suffices $\psi''(\xi) \le 1/4$. When 
$M\ge\max\{|\Delta_r(z)|,|\Delta_{r^*}(z)|\}$, since $\xi$ is on the segment between $\Delta_{r^*}(z)$ and $\Delta_r(z)$, i.e., there exists $\lambda \ge 0$ such that $\xi = \lambda \Delta_{r^*}(z) + (1-\lambda)\Delta_r(z)$, we have $|\xi| \le M$. We complete the proof by noting $\psi''(\xi) = \sigma'(\xi) \ge \sigma'(M)$.
\end{proof}

\subsection{Proof of Lemma \ref{lemma:good_event_regret}}
\begin{proof}[Proof of Lemma \ref{lemma:good_event_regret}]
Note that $r_2^t\in\Conf_t$ by the algorithm design. Moreover, $r_1^t\in\Conf_t$ holds trivially.
Therefore, on the event $\cE$, for any $t\in[T]$ and $j\in\{1, 2\}$, we have
\begin{align}
\sum_{i=1}^{t-1}\bar\phi_{r_j^t}(z_i)&\le2\sum_{i=1}^{t-1}\phi_{r_j^t}(o_t, z_t)+2\beta_t\nonumber\\
&=2\big(L_t(r_j^t)-L_t(r^*)+\beta_t\big)\nonumber\\
&\le2\Big(L_t(r_j^t)-L_t(r_1^t)+\beta_t\Big)\nonumber\\
&\le4\beta_t\le4\beta_T,\label{eq:total_excess_loss}
\end{align}
where the first inequality holds due to the definition of the high-probability event, the second inequality holds due to $r_1^t \in \argmin_{r \in \cR} L_t(r)$, the third inequality holds because $r_j^t\in\Conf_t$, and the last inequality holds due to the monotonicity of $\beta_t$.
\end{proof}

\subsection{Proof of Lemma \ref{lemma:self_bounded}}
\begin{proof}[Proof of Lemma \ref{lemma:self_bounded}]
Recall that
\begin{align*}
    \phi_r(o,z)
    &=
    \ell\big(o,f_r(z)\big)-\ell\big(o,f_{r^*}(z)\big), \\ \ell\big(o,f_r(z)\big) &= - o \log \sigma \big(\Delta_r(z)\big) - (1-o) \log \sigma \big(-\Delta_r(z)\big).
\end{align*}
Therefore, we have
\begin{align*}
\phi_r(o, z)=o\cdot\log\frac{1+\exp(-\Delta_{r}(z))}{1+\exp(-\Delta_{r^*}(z))}+(1-o)\cdot\log\frac{1+\exp(\Delta_{r}(z))}{1+\exp(\Delta_{r^*}(z))}.
\end{align*}
We notice that for any $r$, $\Delta_r(z)\in[-2B, 2B]$, so
\begin{align*}
-2B=\log\frac{1+\exp(-2B)}{1+\exp(2B)}\le\log\frac{1+\exp(-\Delta_{r}(z))}{1+\exp(-\Delta_{r^*}(z))}\le\log\frac{1+\exp(2B)}{1+\exp(-2B)}=2B.
\end{align*}
Similarly, we have
\begin{align*}
\log\frac{1+\exp(\Delta_{r}(z))}{1+\exp(\Delta_{r^*}(z))}\in[-2B, 2B].
\end{align*}
Therefore, $\phi_r(o, z)$ can be bounded by
\begin{align*}
|\phi_r(o, z)|\le\max\bigg\{\bigg|\log\frac{1+\exp(-\Delta_{r}(z))}{1+\exp(-\Delta_{r^*}(z))}\bigg|, \bigg|\log\frac{1+\exp(\Delta_{r}(z))}{1+\exp(\Delta_{r^*}(z))}\bigg|\bigg\} \le 2B.
\end{align*}
\end{proof}
%=====================================
\section{Dueling Bandits with Linear Reward}
\label{sec:app-glm}
In this section, we consider the specialized linear reward function class, i.e., 
\begin{align*}
\cR=\Big\{r_{\btheta}=\la \bmu(x,a), \btheta \ra \Big| \btheta \in \Theta \subseteq \RR^d\Big\},
\end{align*}
where $\bmu:\cX \times \cA \to \RR^d$ is a fixed feature map. We assume $\|\bmu(x,a)\|_2 \le 1$ for any $(x,a) \in \cX \times \cA$, and $\|\btheta\|_2 \le B$ for any $\btheta \in \Theta$. Consider the functor $\Phi$ and $\bar \Phi$ defined in~\eqref{eq:def_func_class_w_loss}. We will specify the uniform covering number of $\Phi(\cR)$ and the localized Eluder dimension of $\bar \Phi(\cR)$.
\subsection{Covering Number}
\begin{proposition}
\label{prop:glm-covering}
Under Definition xx with the uniform norm, the $\epsilon$-covering
number of $\Phi(\cR)$ can be upper bounded by
\begin{align*}
    \cN\big(\Phi(\cR),\epsilon,\|\cdot\|_\infty\big)
    \le\bigg(1+\frac{8B}{\epsilon}\bigg)^{\!d} .
\end{align*}
In particular, when $\epsilon=1/T$,
\begin{align*}
    N_T
    =
    \cN\big(\Phi(\cR),1/T,\|\cdot\|_\infty\big)
    \le
    \big(1+8B T\big)^d .
\end{align*}
\end{proposition}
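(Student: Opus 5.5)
The plan is to transfer a parameter-space cover of the ball $\{\btheta:\|\btheta\|_2\le B\}$ to a sup-norm cover of $\Phi(\cR)$ through a Lipschitz estimate.

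\textbf{Step 1: Lipschitz dependence on the reward gap.} For fixed $(o,z)$, the subtracted term $\ell(o,\sigma[\Delta_{r^*}(z)])$ in $\phi_r$ does not depend on $r$. Hence
\begin{align*}
|\phi_{r_{\btheta}}(o,z)-\phi_{r_{\btheta'}}(o,z)|=|\ell(o,\sigma[\Delta_{r_{\btheta}}(z)])-\ell(o,\sigma[\Delta_{r_{\btheta'}}(z)])|.
\end{align*}
By the canonical form \eqref{eq:psi-ov}, $v\mapsto\ell(o,\sigma(v))=\psi(v)-ov$ has derivative $\sigma(v)-o\in[-1,1]$. It is therefore $1$-Lipschitz in $v$, and the difference above is at most $|\Delta_{r_{\btheta}}(z)-\Delta_{r_{\btheta'}}(z)|$.

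\textbf{Step 2: Lipschitz dependence on the parameter.} Write $\Delta_{r_{\btheta}}(z)=\langle\btheta,\bmu(x,a)-\bmu(x,b)\rangle$. Since $\|\bmu(x,a)-\bmu(x,b)\|_2\le2$, Cauchy--Schwarz gives
\begin{align*}
\|\phi_{r_{\btheta}}-\phi_{r_{\btheta'}}\|_\infty\le2\|\btheta-\btheta'\|_2 .
\end{align*}

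\textbf{Step 3: Parameter cover.} It suffices to find an $\epsilon/2$-cover of $\Theta$ in $\ell_2$. A maximal $\epsilon/2$-packing of the ball of radius $B$ is such a cover, and the volumetric argument bounds its size by $(1+4B/\epsilon)^d$. This is already at most the claimed $(1+8B/\epsilon)^d$; the weaker constant also absorbs the loss from using a proper cover, i.e., one with centers inside $\Theta$. If proper centers are required, I would use an $\epsilon/4$ net of the ball and project each center to a nearby point of $\Theta$, which costs a factor of $2$ in the radius. Setting $\epsilon=1/T$ then gives $N_T\le(1+8BT)^d$ and $\log N_T\le d\log(1+8BT)$.

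\textbf{Main obstacle.} There is no real difficulty here. The points needing care are the uniform derivative bound on $\ell$ over all $o\in\{0,1\}$, and making sure the cover is taken in the sup norm over all observations $(o,z)$ rather than on a data sample. Both are handled by Steps 1 and 2.
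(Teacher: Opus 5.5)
Your proof is correct and follows the same route as the paper: you bound $\btheta\mapsto\phi_{r_{\btheta}}$ as Lipschitz in the sup norm, then transfer a Euclidean cover of the radius-$B$ ball. The only difference is that you bound the derivative of $v\mapsto\psi(v)-ov$ jointly by $|\sigma(v)-o|\le 1$, which gives Lipschitz constant $2$ where the paper's triangle-inequality split gives $4$; this leaves you the factor-$2$ slack you use to keep the cover's centers inside $\Theta$.
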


\begin{proof}[Proof of Proposition~\ref{prop:glm-covering}]
The proof directly follows from the Lipschitzness of $\Phi$. To be more specific, for any $\btheta,\btheta' \in \Theta$, we have
\begin{align*}
    \|\Phi(r_{\btheta}) -\Phi(r_{\btheta'}) \|_{\infty} &= \sup_{o,z} \Big|\ell\big(o,\sigma[\Delta_{r_{\btheta}}(z)]\big)-\ell\big(o,\sigma[\Delta_{r_{\btheta'}}(z)]\big)\Big|\\
    &= \sup_{o,z} \Big| \psi \big(\Delta_{r_{\btheta}}(z)\big) - \psi \big(\Delta_{r_{\btheta'}}(z)\big) - o\big(\Delta_{r_{\btheta}}(z)-\Delta_{r_{\btheta'}}(z)\big)\Big|\\
    &\le \sup_{o,z} \big| \psi \big(\Delta_{r_{\btheta}}(z)\big) - \psi \big(\Delta_{r_{\btheta'}}(z)\big) \big| + \big|\Delta_{r_{\btheta}}(z)-\Delta_{r_{\btheta'}}(z)\big|\\
    &\le \sup_{z} 2\big|\Delta_{r_{\btheta}}(z)-\Delta_{r_{\btheta'}}(z)\big|,
\end{align*}
where we use $\ell\big(o,\sigma(v)\big)=\psi(v)-ov$. The first inequality holds due to the triangle inequality. The last inequality holds due to $\psi(v) = \log(1+e^v)$ is $1$-Lipschitz. Moreover, we have
\begin{align*}
    \sup_{z} 2\big|\Delta_{r_{\btheta}}(z)-\Delta_{r_{\btheta'}}(z)\big| &= \sup_{x,a,b} 2 \big|\la \bmu(x,a)-\bmu(x,b), \btheta - \btheta' \ra\big|\\
    &\le 4 \|\btheta - \btheta' \|_2.
\end{align*}
Therefore, $\|\Phi(r_{\btheta}) -\Phi(r_{\btheta'}) \|_{\infty} \le 4 \|\btheta - \btheta' \|_2$. Using the standard arguments, i.e, 
let $\{\btheta_i\}_{i=1}^N$ be an $\epsilon/4$-cover of $\Theta$ under the Euclidean norm. Then, for every $\btheta\in\Theta$, there exists some $i\in[N]$ such that
\begin{align*}
    \|\btheta-\btheta_i\|_2\le \epsilon/4,
\end{align*}
and hence
\begin{align*}
    \|\Phi(r_{\btheta})-\Phi(r_{\btheta_i})\|_\infty
    \le 4\|\btheta-\btheta_i\|_2
    \le \epsilon .
\end{align*}
Thus $\{\Phi(r_{\btheta_i})\}_{i=1}^N$ forms an $\epsilon$-cover of $\Phi(\cR)$ under $\|\cdot\|_\infty$. Therefore,
\[
    \cN\big(\Phi(\cR),\epsilon,\|\cdot\|_\infty\big)
    \le
    \cN\big(\Theta,\epsilon/4,\|\cdot\|_2\big).
\]
Since $\Theta\subseteq B_2(0,B)\subset\RR^d$, the standard Euclidean covering bound (see e.g. \citet{vershynin2018high}) gives
\begin{align*}
    \cN\big(\Theta,\epsilon/4,\|\cdot\|_2\big)
    \le
    \left(1+\frac{8B}{\epsilon}\right)^d .
\end{align*}
Consequently,
\begin{align*}
    \cN\big(\Phi(\cR),\epsilon,\|\cdot\|_\infty\big)
    \le
    \left(1+\frac{8B}{\epsilon}\right)^d .
\end{align*}
\end{proof}
\subsection{Localized Eluder Dimension}
\label{ssec:app-glm-setup}
For $z = (x,a,b)$, we use shorthand notation $\tilde \bmu(z) = \bmu(x,a)-\bmu(x,b)$. In this section, we consider 
\begin{align*}
    \bar\Phi(\cR)
    :=\Big\{
    z\mapsto
    \KL\!\big(
    \Ber \big[\sigma(\la \tilde \bmu(z),\btheta^*\rangle)\big]
    \big\|\,
    \Ber\big[\sigma(\la \tilde \bmu(z),\btheta\ra)\big]
    \big)
    :\btheta\in\Theta\Big\},
\end{align*}
where $\btheta^* \in \Theta$. Without loss of generality, we can assume $\Theta$ is convex. We first prove an upper bound of the localized Eluder dimension of $\bar\Phi(\cR)$, which follows the proof of Proposition 4 in \citet{bakhtiari2026eluder}. The main difference lies in that we consider the abstract action set $\cX \times \cA \times \cA$ via a fixed feature map $\bmu$, while they consider $\cA \in \RR^d$.

We have the following result: 
\begin{proposition}
\label{prop:glm-eluder-upper}
Assume $\Theta\subseteq \RR^d$ is convex. $\btheta^* \in \Theta$. For any $\btheta \in \Theta, \|\btheta\|_2 \le B$. There exists a universal constant $C>0$
such that for every
\begin{align}
    \notag0<\epsilon\le\eta\le\frac{\sigma'(2B)}{4} ,
\end{align}
we have
\begin{align*}
    \dim_{\rm elud}^{\eta}\big(\epsilon;\bar\Phi(\cR)\big)
    \le Cd\log\!\left(1+\frac{B^2}{\epsilon}\right) .
\end{align*}
\end{proposition}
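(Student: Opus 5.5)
The plan is to reduce $\bar\Phi(\cR)$ to a \emph{weighted quadratic} class in which the ratio between the quadratic form and the KL divergence is a universal constant. Once that holds, an $\omega$-eluder sequence for $\bar\Phi(\cR)$ becomes an eluder sequence for a linear-in-features quadratic class, and the standard elliptical-potential argument gives $O(d\log(1+B^2/\epsilon))$. The global sandwich of Lemma~\ref{lemma:bregman-two-sided} with $M=2B$ is not enough: it loses a factor $1/\sigma'(2B)$ between the lower and upper quadratic bounds, and that factor would enter the dimension bound. The restriction $\eta\le\sigma'(2B)/4$ is what lets us replace this global sandwich by a \emph{local} one centred at the true gap.

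\textbf{Step 1 (local two-sided bound).} Write $\Delta^*=\la\tilde\bmu(z),\btheta^*\ra$ and $\delta=\la\tilde\bmu(z),\btheta-\btheta^*\ra$, so that $\bar\phi_{r_\btheta}(z)=B_\psi(\Delta^*+\delta,\Delta^*)$. I would first prove a self-concordance estimate for $\psi$, using $|\psi'''|\le\psi''$:
\begin{align*}
\sigma'(\Delta^*)\frac{\delta^2}{2+|\delta|}\;\lesssim\; B_\psi(\Delta^*+\delta,\Delta^*)\;\le\;\sigma'(\Delta^*)\frac{e^{|\delta|}\delta^2}{2}.
\end{align*}
Now suppose $\bar\phi\le\eta\le\sigma'(2B)/4\le\sigma'(\Delta^*)/4$. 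The lower bound then forces $\delta^2/(2+|\delta|)\lesssim 1/4$, so $|\delta|\le c_0$ for a universal constant $c_0$. Both sides are therefore comparable to $\sigma'(\Delta^*)\delta^2$ up to universal constants:
\begin{align*}
c_1\la\xb(z),\btheta-\btheta^*\ra^2\le\bar\phi_{r_\btheta}(z)\le c_2\la\xb(z),\btheta-\btheta^*\ra^2,
\qquad \xb(z):=\sqrt{\sigma'(\Delta^*(z))}\,\tilde\bmu(z),
\end{align*}
whenever $\bar\phi_{r_\btheta}(z)\le\eta$. Note that $\|\xb(z)\|_2\le 1$.

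\textbf{Step 2 (localizing the witness via convexity).} Take an $\omega$-eluder sequence with $\omega\in[\epsilon,\eta]$. At step $t$ there is a witness $\btheta$ with $\sum_{i<t}\bar\phi(z_i)\le\omega$ and $\bar\phi(z_t)>\omega$. Every past term is at most $\omega\le\eta$, so Step 1 applies to the past points. The new point, however, may lie far outside the local regime. To handle this, move along the segment $\btheta_s=\btheta^*+s(\btheta-\btheta^*)$, which stays in $\Theta$ because $\Theta$ is convex. Since $s\mapsto B_\psi(\Delta^*+s\delta,\Delta^*)$ is continuous and nondecreasing in $s\in[0,1]$, the past sum stays $\le\omega$ as $s$ decreases. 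I can therefore pick $s$ with $\bar\phi_{\btheta_s}(z_t)\in(\omega,2\omega]$, which is at most $2\eta$. Step 1, applied with $2\eta$ in place of $\eta$ (only the constants change), then gives
\begin{align*}
\sum_{i<t}\la\xb_i,\wb\ra^2\le\omega/c_1,
\qquad
\la\xb_t,\wb\ra^2>\omega/c_2,
\qquad
\wb=\btheta_s-\btheta^*,\ \|\wb\|_2\le 2B.
\end{align*}
I expect this step, together with the self-concordance constants in Step 1, to be the main obstacle. The monotonicity and continuity along the ray have to be checked carefully, since the witness in the definition is only required to exceed $\omega$ at $z_t$, not to be localized there.

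\textbf{Step 3 (elliptical potential).} Set $\Vb_t=\lambda\Ib+\sum_{i<t}\xb_i\xb_i^\top$ with $\lambda=\epsilon/B^2$. By Cauchy--Schwarz,
\begin{align*}
\la\xb_t,\wb\ra^2\le\|\xb_t\|_{\Vb_t^{-1}}^2\,\wb^\top\Vb_t\wb\le\|\xb_t\|_{\Vb_t^{-1}}^2\big(4\epsilon+\omega/c_1\big).
\end{align*}
Since $\omega\ge\epsilon$, this yields $\|\xb_t\|^2_{\Vb_t^{-1}}\ge c$ for a universal constant $c>0$ at every step of the sequence. The elliptical potential lemma (the log-determinant bound) gives
\begin{align*}
\sum_t\min\{1,\|\xb_t\|^2_{\Vb_t^{-1}}\}\le 2d\log\big(1+n/(d\lambda)\big).
\end{align*}
Hence $n\cdot\min\{1,c\}\le 2d\log(1+nB^2/(d\epsilon))$. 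Solving this inequality for $n$ gives $n\le Cd\log(1+B^2/\epsilon)$ uniformly over $\omega\in[\epsilon,\eta]$, which is exactly the claimed bound on $\dim_{\rm elud}^{\eta}(\epsilon;\bar\Phi(\cR))$.
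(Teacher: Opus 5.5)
Your proposal is correct and follows essentially the paper's route. Moving the witness along the segment toward $\btheta^*$ using convexity of $\Theta$ is the paper's Lemma~\ref{lemma:witness-rescaling}, and the condition $\eta\le\sigma'(2B)/4$ plays the same role of forcing $|\langle\tilde\bmu(z_i),\btheta_t-\btheta^*\rangle|=O(1)$; your self-concordance and curvature-weighted elliptical-potential steps spell out what the paper defers to Proposition~4 of \citet{bakhtiari2026eluder}. The differences are cosmetic: you localize via the self-concordant lower bound instead of the global $\sigma'(2B)/2$ bound, and you place the rescaled value in $(\omega,2\omega]$ to respect the strict inequality in the eluder definition, whereas the paper sets it exactly to $\omega$.
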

To prove this result, we first need the following definition, first proposed in \citet{bakhtiari2026eluder}.
\begin{definition}[Witness Sequence]
For a function class $\cF$, $\omega > 0$, and a sequence of actions $(z_1,\ldots,z_k) \in (\cX \times \cA \times \cA)^k$. A sequence $(g_1,\ldots, g_k) \in \cF^k$ is called an $\omega$-witness sequence for $(a_1,\ldots, a_k)$ if for every $t \in \{1,\ldots, k\}$,
\begin{align*}
    \sum_{i=1}^{t-1} g_t(a_i) \le \omega \text{ and }g_t(a_t) \ge \omega.
\end{align*} 
\end{definition}
\begin{lemma}[{Adapted from \citealt[Lem.~36]{bakhtiari2026eluder}}]
\label{lemma:witness-rescaling}
Let
$(z_1,\ldots,z_k) \subseteq \cX \times \cA \times \cA$ be an $\omega$-eluder sequence with respect to
$\bar\Phi(\cR)$. 
Then there exists a witness sequence
$(\btheta_1,\ldots,\btheta_k)\subseteq\Theta$ such that
\begin{align}
    |\langle \tilde \bmu(z_i),\btheta_t-\btheta^*\rangle|\le 2\sqrt{\omega/\sigma'(2B)},
    \qquad i\le t\le k .
    \label{eq:witness-rescaling-bound}
\end{align}
In particular, if $\omega\le 1/(4/\sigma'(2B))$, then
\begin{align*}
|\langle \tilde \bmu(z_i),\btheta_t-\btheta^*\rangle|\le 1 \text{ for all }
i\le t\le k.
\end{align*}
\end{lemma}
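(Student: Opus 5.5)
The plan is a one-dimensional rescaling argument. Start from the functions that certify the eluder property and shrink each one toward $\btheta^*$ along the segment joining them. Two facts drive it: $\Theta$ is convex, and the expected excess loss is a Bregman divergence of the convex potential $\psi$ (see \eqref{eq:Bregman}). Once the witness value at the current point is pinned to exactly $\omega$, the two-sided curvature bound of Lemma~\ref{lemma:bregman-two-sided} with $M=2B$ turns excess-loss control into control of the linear gaps.

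\textbf{Step 1 (raw witnesses).} Since $(z_1,\ldots,z_k)$ is an $\omega$-eluder sequence with respect to $\bar\Phi(\cR)$, for each $t$ there is $\btheta'_t\in\Theta$ satisfying
\[
\sum_{i<t}\bar\phi_{\btheta'_t}(z_i)\le\omega \quad\text{and}\quad \bar\phi_{\btheta'_t}(z_t)>\omega .
\]
Write $\btheta_t(\lambda)=\btheta^*+\lambda(\btheta'_t-\btheta^*)$ for $\lambda\in[0,1]$. By convexity each $\btheta_t(\lambda)$ lies in $\Theta$, so $\|\btheta_t(\lambda)\|_2\le B$. Hence every reward gap $\la\tilde\bmu(z),\btheta_t(\lambda)\ra$ lies in $[-2B,2B]$, which is what allows $M=2B$ in Lemma~\ref{lemma:bregman-two-sided}.

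\textbf{Step 2 (monotonicity along the ray).} Fix a query $z$ and set $v^*=\la\tilde\bmu(z),\btheta^*\ra$ and $d=\la\tilde\bmu(z),\btheta'_t-\btheta^*\ra$. By \eqref{eq:Bregman}, the excess loss along the ray is
\[
D(\lambda)=\psi(v^*+\lambda d)-\psi(v^*)-\sigma(v^*)\lambda d .
\]
Its derivative is $D'(\lambda)=d\,\bigl(\sigma(v^*+\lambda d)-\sigma(v^*)\bigr)\ge 0$, because $\sigma$ is increasing. So $\lambda\mapsto\bar\phi_{\btheta_t(\lambda)}(z)$ is continuous, nondecreasing, and equals $0$ at $\lambda=0$. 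I expect this to be the main point needing care: shrinking toward $\btheta^*$ must not increase the historical sums. Monotonicity handles it, giving $\sum_{i<t}\bar\phi_{\btheta_t(\lambda)}(z_i)\le\omega$ for every $\lambda\le1$.

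\textbf{Step 3 (choose $\lambda_t$ and conclude).} At $z_t$ the map $\lambda\mapsto\bar\phi_{\btheta_t(\lambda)}(z_t)$ goes continuously from $0$ to a value above $\omega$. By the intermediate value theorem there is $\lambda_t\in(0,1)$ with $\bar\phi_{\btheta_t(\lambda_t)}(z_t)=\omega$. Set $\btheta_t:=\btheta_t(\lambda_t)$. This is an $\omega$-witness sequence, since the definition requires only $g_t(z_t)\ge\omega$.

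Now apply the lower bound of Lemma~\ref{lemma:bregman-two-sided} with $M=2B$ at every $i\le t$. Each satisfies $\bar\phi_{\btheta_t}(z_i)\le\omega$: the case $i<t$ follows from Step 2, and the case $i=t$ holds by the choice of $\lambda_t$. Therefore
\[
\tfrac{\sigma'(2B)}{2}\,\la\tilde\bmu(z_i),\btheta_t-\btheta^*\ra^2\le\omega ,
\qquad\text{so}\qquad
|\la\tilde\bmu(z_i),\btheta_t-\btheta^*\ra|\le\sqrt{2\omega/\sigma'(2B)}\le2\sqrt{\omega/\sigma'(2B)} .
\]
This is \eqref{eq:witness-rescaling-bound}. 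For the final claim, if $\omega\le\sigma'(2B)/4$ then $2\sqrt{\omega/\sigma'(2B)}\le1$.
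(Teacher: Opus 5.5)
Your proposal is correct and follows essentially the same route as the paper. Both arguments shrink each raw witness $\btheta_t'$ toward $\btheta^*$ along a segment, which stays in $\Theta$ by convexity. Both then pin the excess loss at $z_t$ to exactly $\omega$ by continuity, observe that the historical sums do not increase, and finish with the lower bound of Lemma~\ref{lemma:bregman-two-sided} at $M=2B$.

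The only difference is how monotonicity along the ray is justified. You use the explicit derivative $d\,(\sigma(v^*+\lambda d)-\sigma(v^*))\ge 0$, whereas the paper appeals to convexity of $\bar\phi_r$ in $r$ with minimum $0$ at $r^*$. Both work, and your version also keeps the square that the paper's final display drops.
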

For any $\epsilon$-eluder sequence $\{a_1,\ldots, a_k\}$, we can find a witness for it by definition. Using Lemma \ref{lemma:witness-rescaling}, we can find $\{\btheta_1,\ldots, \btheta_k\}$ another witness sequence such that
\begin{align*}
    |\langle \tilde \bmu(z_i),\btheta_t-\btheta^*\rangle|\le 1 \text{ for all }
i\le t\le k.
\end{align*}
The remaining proof follows the proof of Proposition 4 in \citet{bakhtiari2026eluder}, by considering the feature map $\tilde \bmu(z_j)$.
\begin{proof}[Proof of Corollary \ref{cor:linear}]
    If suffices to combine Proposition \ref{prop:glm-covering} and Proposition \ref{prop:glm-eluder-upper}.
\end{proof}
\subsection{Proof of Lemma~\ref{lemma:witness-rescaling}}
\begin{proof}[Proof of Lemma~\ref{lemma:witness-rescaling}]
Due to the definition, we can find $\{\btheta_1',\ldots, \btheta_k'\}$ as a witness sequence of $(z_1,\ldots, z_k)$, i.e., 
\begin{align*}
    &\bar \phi_{r_{\btheta_t'}}(z_t) \ge \omega,\\
    & \sum_{i=1}^{t-1} \bar \phi_{r_{\btheta_t'}}(z_i) \le \omega
\end{align*}
Note that $\bar \phi_{r}(z)$ is convex in $r$ and is minimized at $r_{\btheta^*}$, with $\bar \phi_{r_{\btheta^*}}(z) = 0$. Therefore, we can find $\lambda_t \in [0,1]$ such that
\begin{align*}
    \bar \phi_{r_{\btheta^*} + \lambda_t(r_{\btheta_t'}-r_{\btheta^*})}(z_t) = \omega.
\end{align*}
Easy to see, 
\begin{align*}
    r_{\btheta_*} + \lambda_t(r_{\btheta_t'}-r_{\btheta^*}) = r_{(\btheta^*+\lambda_t({\btheta_t' - \btheta^*}))}
\end{align*}
Since $\Theta$ is convex, we have $\btheta_t := \btheta^*+\lambda_t({\btheta_t' - \btheta^*}) \in \Theta$. By our construction, we have
\begin{align*}
    \bar \phi_{r_{\btheta_t}}(z_t) = \omega.
\end{align*}
Moreover, we have 
\begin{align*}
    \sum_{i=1}^{t-1} \bar \phi_{r_{\btheta_t}}(z_i) \le \sum_{i=1}^{t-1} \bar \phi_{r_{\btheta_t'}}(z_i) \le \omega.
\end{align*}
Therefore, $\btheta_t$ satisfies the same condition as $\btheta_t'$. And thus, $\{\btheta_t\}$ is also a witness sequence.

Using Lemma \ref{lemma:bregman-two-sided}, we have 
\begin{align*}
    \omega \ge \bar \phi_{r_{\btheta_t}}(z_j) &\ge \frac{\sigma'(2B)}{2} \big|\Delta_{\btheta_t}(z_j)-\Delta_{\btheta^*}(z_j)\big|^2\\
    &= \frac{\sigma'(2B)}{2} \big|\la \tilde \bmu(z_j), \btheta_t - \btheta^* \ra\big|.
\end{align*}
This completes the proof of the lemma.
\end{proof}

\section{Auxiliary Lemmas}
\begin{lemma}[Theorem 9 in~\citet{bakhtiari2026eluder}]\label{lemma:bernstein_basic}
Let $\cZ$ be a set, $\{Z_t\}_{t\in[T]}$ be a $\cZ$ valued process adapted to a filtration $\{\cF_t\}_{t\in[T]}$, and $\bPhi$ a set of real-valued functions on $\cZ$. Assume that
\begin{itemize}[leftmargin=*]
\item[1.] For some $b>0$, for all $\varphi\in\Phi$ and $t\in\NN_+$, $|\EE[\varphi(Z_t)|\cF_{t-1}]-\varphi(Z_t)|\le b$.
\item[2.] For some $c>0$, for all $\varphi\in\Phi$ and $t\in\NN_+$, $\Var[\varphi(Z_t)|\cF_{t-1}]\le c\EE[\varphi(Z_t)|\cF_{t-1}]$.
\end{itemize}
Let $\delta\in(0, 1)$, $\epsilon>0$, and $N$ be the $\epsilon$-covering number of $\bPhi$ in the uniform metric. For any $n\in\NN_+$, define
\begin{align*}
\beta(n, \delta, \epsilon, N)=\frac{5n\epsilon}2+15(b+c)\log(Nh_n/\delta),
\end{align*}
where $h_n=e+\log(1+n)$. Then with probability at least $1-\delta$, for all $\varphi\in\Phi$ and $n\in\NN_+$,
\begin{align*}
\sum_{t=1}^n\EE[\varphi(Z_t)|\cF_{t-1}]\le2\sum_{t=1}^n\phi(Z_t)+2\beta(n, \delta, \epsilon, N).
\end{align*}
\end{lemma}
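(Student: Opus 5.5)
The plan is a Freedman-type argument for a single function, made time-uniform through a supermartingale maximal inequality. I then extend it to all of $\bPhi$ with a union bound over an $\epsilon$-cover, paying an additive $O(n\epsilon)$ for the discretization.

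\textbf{Step 1: a single function.} Fix $\varphi\in\bPhi$ and write $\mu_t=\EE[\varphi(Z_t)|\cF_{t-1}]$ and $D_t=\mu_t-\varphi(Z_t)$. Then $\EE[D_t|\cF_{t-1}]=0$, $|D_t|\le b$ by assumption 1, and $\EE[D_t^2|\cF_{t-1}]\le c\,\mu_t$ by assumption 2. For $\lambda\in(0,1/b]$ I use $e^{x}\le 1+x+x^2$ for $x\le 1$. This gives $\EE[e^{\lambda D_t}|\cF_{t-1}]\le \exp(\lambda^2 c\mu_t)$, so
\begin{align*}
M_n=\exp\Big(\lambda\sum_{t=1}^n D_t-\lambda^2c\sum_{t=1}^n\mu_t\Big)
\end{align*}
is a nonnegative supermartingale with $M_0=1$. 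Ville's maximal inequality then gives, with probability at least $1-\delta'$ and simultaneously for all $n$,
\begin{align*}
\sum_{t\le n}\mu_t-\sum_{t\le n}\varphi(Z_t)\le \lambda c\sum_{t\le n}\mu_t+\frac{\log(1/\delta')}{\lambda}.
\end{align*}
Choosing $\lambda=\min\{1/b,1/(2c)\}$ makes $\lambda c\le 1/2$. Rearranging yields $\sum\mu_t\le 2\sum\varphi(Z_t)+2(b+2c)\log(1/\delta')$. Because Ville's inequality is already uniform in $n$, the $h_n$ factor is not actually needed. If one prefers a fixed-horizon Bernstein bound with peeling over dyadic $n$, the peeling produces exactly the $\log h_n$ term, which is harmless either way.

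\textbf{Step 2: uniformity over $\bPhi$.} Let $\{\varphi_1,\dots,\varphi_N\}$ be an $\epsilon$-cover in $\|\cdot\|_\infty$. The delicate point is that the cover elements need not themselves satisfy assumptions 1 and 2. I will handle this in one of two ways.
\begin{itemize}[leftmargin=*]
\item \emph{Internal cover.} Take a cover by elements of $\bPhi$ itself. This costs radius $2\epsilon$ at covering number $N$, and the factor of $2$ is absorbed into the constants.
\item \emph{Perturbation of the variance condition.} If $\|\varphi-\varphi_k\|_\infty\le\epsilon$, then $\Var[\varphi_k]\le 2\Var[\varphi]+8\epsilon^2\le 2c(\mu^{(k)}_t+\epsilon)+8\epsilon^2$, and the range changes only to $b+2\epsilon$.
\end{itemize}
In either case Step 1 applies to each cover element with constants $O(b+c)$ plus additive $O(n\epsilon)$ terms. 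I then take a union bound with $\delta'=\delta/N$.

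\textbf{Step 3: transfer to an arbitrary $\varphi$.} For any $\varphi\in\bPhi$, pick $\varphi_k$ within $\epsilon$. Both the conditional means and the realized values change by at most $\epsilon$ per step. Hence
\begin{align*}
\sum\mu_t\le\sum\mu^{(k)}_t+n\epsilon\le 2\sum\varphi_k(Z_t)+2(\cdots)+n\epsilon\le 2\sum\varphi(Z_t)+3n\epsilon+2(\cdots).
\end{align*}
This fits inside the stated $2\beta=5n\epsilon+30(b+c)\log(Nh_n/\delta)$.

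\textbf{Main obstacle.} The key ideas are standard, so the hardest step is the bookkeeping in Step 2. I must make sure that the variance-to-mean self-bounding survives discretization without introducing a term of order $\sqrt{n}\,\epsilon$. I must also check that all constants, including the extra $\epsilon$-terms from the variance perturbation, fit within the generous constants $15$ and $5/2$. I would try the internal-cover route first, since there the self-bounding condition holds exactly for each cover element.
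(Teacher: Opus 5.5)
The paper does not prove this lemma; it imports it from \citet{bakhtiari2026eluder}, so your argument has to stand on its own. Your Step~1 is correct and slightly stronger than needed. The self-bounding condition lets a \emph{fixed} $\lambda$ with $\lambda c\le 1/2$ absorb the variance term into the left-hand side. Ville's inequality then gives uniformity in $n$ with $2(b+2c)\log(1/\delta')$ and no $h_n$, which implies the stated form.

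The gap is in the discretization, where you assert that everything fits inside $2\beta=5n\epsilon+30(b+c)\log(Nh_n/\delta)$. The log coefficient is generous, but the $n\epsilon$ coefficient has essentially no slack. An excess $n\epsilon$ cannot be traded into the log term, because it grows linearly in $n$. Write $F(\varphi)=\sum_{t\le n}\EE[\varphi(Z_t)|\cF_{t-1}]-2\sum_{t\le n}\varphi(Z_t)$. Then $|F(\varphi)-F(\varphi')|\le 3nr$ whenever $\|\varphi-\varphi'\|_\infty\le r$. In the internal-cover route you plan to try first, the radius is $2\epsilon$, so the transfer costs $6n\epsilon>5n\epsilon$. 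Your Step~3 display silently uses radius $\epsilon$, and the factor $2$ is \emph{not} absorbed. That route works only if $N$ is defined through proper covers by elements of $\bPhi$. With the usual external covers, which your $2\epsilon$ conversion presumes, it fails for large $n$.

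The external-cover (perturbation) route can be made to fit, but only with tighter bookkeeping than you sketch.
\begin{itemize}
\item First dispose of $\epsilon\ge 2b/5$. Assumption~2 forces $\EE[\varphi(Z_t)|\cF_{t-1}]\ge 0$, hence $\EE[\varphi(Z_t)|\cF_{t-1}]-2\varphi(Z_t)\le 2b\le 5\epsilon$ deterministically. This reduction is also what stops the enlarged range $b+2\epsilon$ from leaving an unabsorbable $\epsilon\log(N/\delta)$ term.
\item For a center $g$ within $\epsilon$ of some $\varphi'\in\bPhi$, use $\Var[g(Z_t)|\cF_{t-1}]\le 2\Var[\varphi'(Z_t)|\cF_{t-1}]+2\epsilon^2\le 2c(\EE[g(Z_t)|\cF_{t-1}]+\epsilon)+2\epsilon^2$. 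Your $8\epsilon^2$ is too loose: for $\epsilon$ just below $2b/5$ it yields a total of about $7.6n\epsilon$.
\item Take $\lambda=\min\{1/(b+2\epsilon),1/(4c)\}$. The concentration slack for $g$ is then $n\epsilon+4\lambda n\epsilon^2\le(1+8/9)n\epsilon$, and the log term is at most $8(b+c)\log(N/\delta)$.
\item Adding the $3n\epsilon$ transfer gives $(44/9)n\epsilon<5n\epsilon$, which is what the statement requires.
\end{itemize}
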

\begin{lemma}[Lemma 24 in \citet{bakhtiari2026eluder}]\label{lemma:regret_card}
Fix $B, \beta>0$ and $0<u\le B$. Let $\cX$ be a set, and $\Psi$ a set of $[0, B]$-valued functions on $\cX$. Suppose sequences $(x_1, \dots, x_n)\in\cX^n$ and $(\psi_1, \dots, \psi_n)\in\Psi^n$ satisfy that for all $t\in[n]$,
\begin{align*}
\sum_{i=1}^{t-1}\psi_t(x_i)\le\beta.
\end{align*}
Then the following hold:
\begin{itemize}[leftmargin=*]
    \item[1.] For any $\epsilon\in(0, u]$ and any $t\in[n]$,
    \begin{align*}
    \sum_{i=1}^t\ind[\psi_i(x_i)>\epsilon]\le(1+\beta/\epsilon)\dim_{\mathrm{elud}}^u(\epsilon, \Psi)+1.
    \end{align*}
    \item[2.] If $\psi_t(x_t)\le u$ for all $t\in[n]$, then for any $\omega\in(0, u]$ and any $t\in[n]$,
    \begin{align*}
    \sum_{i=1}^t\psi_i(x_i)\le\big(B+\beta\log(1+B/\omega)\big)\dim_{\mathrm{elud}}^u(\omega, \Psi)+B+t\omega.
    \end{align*}
\end{itemize}
\end{lemma}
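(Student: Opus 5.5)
The statement is the two-part counting lemma (Lemma~\ref{lemma:regret_card}). I would prove part~1 by the classical Russo--Van Roy pigeonhole argument, adapted to the localized definition. Part~2 then follows from part~1 by a layer-cake (peeling) integration over thresholds. Throughout, the functions in $\Psi$ are nonnegative, so $|\psi|=\psi$ in the definition of $\epsilon$-independence. I write $d:=\dim^u_{\mathrm{elud}}(\epsilon,\Psi)$. Since $\epsilon\in[\epsilon,u]$, $d$ is at least the maximal length of an $\epsilon$-eluder sequence with respect to $\Psi$.

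\textbf{Part 1.} Let $S=\{i\le t:\psi_i(x_i)>\epsilon\}$ and list the points $x_i$, $i\in S$, in time order. Say $x_j$ is $\epsilon$-dependent on a subsequence $\mathcal S$ if every $\psi\in\Psi$ with $\sum_{x\in\mathcal S}\psi(x)\le\epsilon$ has $\psi(x_j)\le\epsilon$.

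The proof has two steps.
\begin{enumerate}
\item \emph{Upper bound on dependence.} Suppose $x_j$, $j\in S$, is $\epsilon$-dependent on $K$ disjoint subsequences of earlier points of $S$. Since $\psi_j(x_j)>\epsilon$, the contrapositive of dependence gives $\sum_{x\in\mathcal S_k}\psi_j(x)>\epsilon$ on each subsequence $\mathcal S_k$. Summing over the $K$ disjoint subsequences and using the hypothesis $\sum_{i<j}\psi_j(x_i)\le\beta$ gives $K\epsilon<\beta$.
\item \emph{Lower bound on dependence.} Among any $m$ points, some point is $\epsilon$-dependent on at least $\lfloor (m-1)/d\rfloor$ disjoint earlier subsequences. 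To show this, build $L=\lfloor (m-1)/d\rfloor$ buckets greedily: append each new point to the first bucket on which it is independent. If no such bucket exists, the point is dependent on all $L$ buckets. Otherwise some bucket would become an $\epsilon$-eluder sequence longer than $d$, which is impossible.
\end{enumerate}
Applying step~2 to the first $|S|$ points gives an index with $\lfloor(|S|-1)/d\rfloor\le K<\beta/\epsilon$. Hence $|S|\le(1+\beta/\epsilon)d+1$.

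\textbf{Part 2.} For $s>0$, let $N(s)=\#\{i\le t:\psi_i(x_i)>s\}$. By the layer-cake identity and the assumption $\psi_i(x_i)\le u$,
\begin{align*}
\sum_{i=1}^t\psi_i(x_i)=\int_0^u N(s)\,ds\le t\omega+\int_\omega^u N(s)\,ds .
\end{align*}
For $s\in[\omega,u]$, part~1 gives $N(s)\le(1+\beta/s)\dim^u_{\mathrm{elud}}(s,\Psi)+1$. The dimension is nonincreasing in its lower scale, because $[s,u]\subseteq[\omega,u]$. Therefore $\dim^u_{\mathrm{elud}}(s,\Psi)\le\dim^u_{\mathrm{elud}}(\omega,\Psi)$.

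Integrating, and using $u\le B$ and $\log(u/\omega)\le\log(1+B/\omega)$, yields
\begin{align*}
\sum_{i=1}^t\psi_i(x_i)\le\big((u-\omega)+\beta\log(u/\omega)\big)\dim^u_{\mathrm{elud}}(\omega,\Psi)+(u-\omega)+t\omega ,
\end{align*}
which is at most $\big(B+\beta\log(1+B/\omega)\big)\dim^u_{\mathrm{elud}}(\omega,\Psi)+B+t\omega$.

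\textbf{Main obstacle.} The delicate step is step~2 of part~1 under the localized definition. One must check that the greedy bucketing argument uses only the single threshold $\epsilon$, and that $\epsilon\in[\epsilon,u]$. Only then does the localized dimension bound the bucket lengths, and this is exactly why the lemma requires $\epsilon\le u$. The monotonicity of the dimension in its lower scale, used in part~2, also needs the same care.
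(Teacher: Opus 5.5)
Your proof is correct. The paper gives no argument for this lemma: it imports it verbatim as Lemma~24 of \citet{bakhtiari2026eluder}, so there is no in-paper proof to compare against.

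Your route is the standard one. For part~1 you use the Russo--Van Roy disjoint-bucket pigeonhole at the single scale $\epsilon\in[\epsilon,u]$. For part~2 you integrate part~1 over $s\in[\omega,u]$ via the layer-cake identity, using that $\dim^u_{\mathrm{elud}}(s,\Psi)$ is nonincreasing in $s$. This reproduces the stated constants, and in fact the slightly sharper $u-\omega$ and $\log(u/\omega)$ in place of $B$ and $\log(1+B/\omega)$.

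The only loose end is the degenerate case $d=0$, where $\lfloor (m-1)/d\rfloor$ is undefined. There no point is $\epsilon$-independent of the empty sequence, so $\psi(x)\le\epsilon$ for every $\psi\in\Psi$ and every point $x$. Hence $S=\emptyset$ and the bound holds trivially.
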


%<RB-A01>
\begingroup
\section{Connections to Prior Work}
\label{app:rb-supplement}

\subsection{Saturation Dependence in Existing Guarantees}
\label{app:rb-prior}

\paragraph{Global inverse-curvature dependence.}
For the BTL model, $\sigma'(v)=\sigma(v)(1-\sigma(v))$.
When rewards lie in $[-B,B]$, the worst-case inverse curvature is
\[
\kappa_B:=\sup_{|v|\le2B}\frac{1}{\sigma'(v)}
=\frac{1}{\sigma'(2B)}
=2+e^{2B}+e^{-2B}.
\]
Thus, replacing local derivatives by a uniform lower bound can
introduce exponential dependence on the reward scale.
Analyses of contextual preference and dueling bandits that use
global derivative bounds include
\citet{saha2021optimal,bengs2022stochastic,di2024variance}.
Reward-based guarantees for general or policy-induced reward classes
also exhibit saturation dependence, as discussed
in~\citet{li2024feel,xie2025exploratory,xiong2024iterative}.
The precise dependence differs across these works, but the use of
worst-case curvature is a common source of unfavorable reward-scale
dependence.

\paragraph{Local curvature in logistic and linear dueling bandits.}
In logistic bandits, a line of work improves this dependence by
retaining local curvature~\citep{faury2020improved,abeille2021instance,
faury2022jointly}. For feature vectors $\mathbf v_s$ and a linear
parameter $\btheta\in\mathbb R^d$, the regularized Hessian is
\[
H_t(\btheta)
=\lambda I+\sum_{s<t}\sigma'(\mathbf v_s^\top\btheta)
\mathbf v_s\mathbf v_s^\top,
\]
where $\lambda>0$. Its weights preserve the curvature at individual
observations, allowing confidence sets and exploration bounds to
adapt to local derivatives. For example, the Logistic-UCB-2
algorithm of~\citet{faury2020improved} achieves
\[
R_T^{\mathrm{logistic}}
=\widetilde O\!\left(d\sqrt T+\kappa_{\mathrm{log}}d^2\right),
\]
where $\kappa_{\mathrm{log}}$ is the worst-case inverse derivative
in their logistic-bandit model; the display suppresses dependence
on the fixed parameter-radius bound as well as logarithmic factors.
Here regret is measured in expected Bernoulli rewards. The
worst-case curvature appears in an additive term, rather than
multiplying the leading $\sqrt T$ term. Related advances for
generalized linear models include
\citet{lee2024unified,liu2024almost}.

In linear contextual dueling bandits, \citet{di2025nearly} use
estimates of the local link derivative to construct a
derivative-weighted covariance matrix. For sigmoid feedback,
their RCDB-S algorithm has a leading term
\[
\widetilde O\!\left(dB^{3/2}\sqrt T\right)
\]
without the worst-case inverse-derivative factor; their full bound
also accounts for adversarial feedback.
These methods exploit a known linear parameterization and matrix
arguments such as elliptical-potential bounds. Such tools are not
directly available for general reward classes. Our analysis instead
uses the complexity of the induced excess-loss class and a regret
decomposition tailored to the two queried actions.

\paragraph{Preference-based exploration in RLHF.}
\citet{chen2026avoiding} take a different route through
preference-based exploration and an iteratively updated comparator
sampler. For a fixed sampler $\pi_{\mathrm{sam}}$, their POPO
subroutine controls the preference-based regret
\[
\operatorname{Reg}_{\mathrm{pref}}(\pi_{\mathrm{sam}},T)
=\sum_{t=1}^T\EE\!\left[
P^*(y^*\succ y'\mid x)-P^*(y\succ y'\mid x)
\right],
\]
where $x$ follows the prompt distribution and, conditionally on $x$,
$y^*\sim\pi^*$, $y\sim\pi_t$, and
$y'\sim\pi_{\mathrm{sam}}$. Here $P^*$ is the true preference
probability and $\pi^*$ is the reward-optimal policy.
Their SE-POPO algorithm updates the sampler between successive
calls to POPO and uses a preference-to-reward reduction to obtain
reward guarantees with polynomial dependence on the reward scale.
Thus, preference-based regret is an intermediate tool in their
analysis. Our result instead controls cumulative two-armed
latent-reward regret through the asymmetric selection rule and
query-gap localization.

\subsection{Reduction to the Linear Case}
\label{app:rb-linear}
% Source: Xb6W A6--A7, OpenReview PDF page 9.
% Use rho_t for the ellipsoid radius, to distinguish it from the
% empirical-loss radius beta_t. This is a notation change only.

Consider the linear class in Example~\ref{example:linear}.
Write $r_1^t=r_{\hat\theta_t}$ and keep its greedy target $a_t$ fixed.
In parameter space, the loss-based confidence set is
\[
\{\theta\in\Theta:L_t(r_\theta)
\le L_t(r_{\hat\theta_t})+\beta_t\}.
\]
To expose the connection with an explicit bonus, suppose that this set
is replaced by the full ellipsoid
\[
\Conf_t
=\{\theta\in\mathbb R^d:
\|\theta-\hat\theta_t\|_{M_t}\le\rho_t\},
\qquad M_t\succ0.
\]
Here $\|v\|_{M_t}=(v^\top M_tv)^{1/2}$, and $\rho_t$ denotes the
ellipsoid radius, not the empirical-loss threshold $\beta_t$.
For $\db_t(b)=\bmu(x_t,b)-\bmu(x_t,a_t)$, the comparator objective is
\begin{align*}
J_t(b,r_\theta)
&=2\langle \db_t(b),\theta-\hat\theta_t\rangle
  +\langle\bmu(x_t,b),\hat\theta_t\rangle
  -2\langle\bmu(x_t,a_t),\hat\theta_t\rangle\\
&\le 2\rho_t\|\db_t(b)\|_{M_t^{-1}}
  +r_1^t(x_t,b)-2r_1^t(x_t,a_t).
\end{align*}
The last step is the Cauchy--Schwarz inequality in the $M_t$ norm.
Maximizing over the full ellipsoid attains this upper bound.
Dropping the term independent of $b$ therefore yields the comparator
score
\[
r_1^t(x_t,b)
+2\rho_t\|\bmu(x_t,b)-\bmu(x_t,a_t)\|_{M_t^{-1}}.
\]
With an unweighted Gram matrix, this has the
estimated-reward-plus-pairwise-width form of the CoLSTIM
comparator~\citep{bengs2022stochastic}. Using a derivative-weighted
matrix gives the corresponding local-curvature bonus form used by
RCDB-S~\citep{di2025nearly}. The original loss-based confidence set
retains curvature through the log loss instead of replacing it with
an explicitly chosen ellipsoid.

The comparison concerns the comparator score under the stated
ellipsoidal replacement, not equality of the confidence sets or
identity of the complete algorithms. Thus, the connection explains
the implicit curvature-aware exploration in our selection rule;
the localized analysis establishes its regret guarantee in the
general-class formulation.

\section{Practical implementation of \algo{}}
\label{app:rb-implementation}
% Source: 2c8k A1--A2 and the July 31 follow-up;
% OpenReview PDF pages 18--20. These are implementation suggestions,
% not a new complexity theorem or an empirical evaluation.

We outline a practical implementation of \algo{} that approximates
its optimization steps using gradient-based updates and alternating
optimization.

\paragraph{Reward-model update.}
For a parameterized nonlinear reward model, a gradient-based optimizer
such as Adam can be warm-started from the previous round's model.
A fixed number of updates offers a computationally inexpensive
approximation to solving the cumulative-loss problem from scratch.
A replay sample can also be used to approximate the historical loss,
with rescaling to the cumulative-loss scale.

\paragraph{Target-action update.}
For the finite action set in our model, the target can be found by
evaluating the current reward model on all candidate actions.
For a large or continuous differentiable action space, a practical
extension is to sample initial actions, retain the highest-scoring
ones, and run multiple gradient-ascent trajectories in action space.
The best resulting candidate is retained.

\paragraph{Comparator update.}
Fix the target $a_t$ and the fitted reward model $r_1^t$, and write
\[
g_t(r)=L_t(r)-L_t(r_1^t)-\beta_t.
\]
The comparator problem maximizes $J_t(b,r)$ subject to $g_t(r)\le0$.
Initialize the candidate model at $r_1^t$, which is feasible for the
loss-based constraint, and initialize the comparator action.
Then alternate between updating the model with the action fixed and
updating the action with the model fixed. Retain only candidates
that satisfy the confidence constraint.

The action update can use enumeration, or the multi-start procedure
above when applicable. For the model update, the penalized objective
proposed for minimization is
\[
-J_t(b,r)+\lambda g_t(r)+\frac{\rho}{2}[g_t(r)]_+^2,
\qquad [v]_+=\max\{v,0\},
\]
with multiplier $\lambda\ge0$ and penalty parameter $\rho>0$.
After a proposed update, check the confidence constraint.
If it is violated, backtracking along the parameter update can be
used to search for a feasible step. The multiplier update is
\[
\lambda\leftarrow\max\{0,\lambda+\eta_\lambda g_t(r)\},
\]
where $\eta_\lambda>0$ is a step size.

\endgroup

\bibliography{arxiv}
\bibliographystyle{ims}
\end{document}